\documentclass{article}


\usepackage[final]{neurips_2021}




\usepackage{enumitem}
\usepackage[utf8]{inputenc} 
\usepackage[T1]{fontenc}    
\usepackage{hyperref}       
\usepackage{url}            
\usepackage{booktabs}       
\usepackage{amsfonts}       
\usepackage{nicefrac}       
\usepackage{amssymb}
\usepackage{microtype}      
\usepackage{xcolor}         

\usepackage{amsmath,bm}
\usepackage[utf8]{inputenc} 
\usepackage[T1]{fontenc}    
\usepackage{hyperref}       
\usepackage{url}            
\usepackage{booktabs}       
\usepackage{amsfonts}       
\usepackage{nicefrac}       
\usepackage{microtype}      
\usepackage{hyperref,url,hyphenat}
\usepackage{caption,subfigure,graphicx,epstopdf,multirow,multicol,booktabs,verbatim,wrapfig,bm}
\usepackage{makecell}
\usepackage{amsthm,amssymb,amsfonts,amsmath}
\usepackage{algorithmic}

\usepackage{bbding} 
\usepackage{threeparttable}
\usepackage{multirow}
\usepackage{multicol}
\usepackage{subfigure}
\usepackage[vlined,boxed,commentsnumbered,linesnumbered,ruled]{algorithm2e}

\newtheorem{theorem}{Theorem}
\newtheorem{lemma}{Proposition}

\newtheorem{proposition}{Proposition}

\newcommand{\tabincell}[2]{\begin{tabular}{@{}#1@{}}#2\end{tabular}}  
\providecommand{\E}{\mathbb{E}}
\providecommand{\R}{\mathbb{R}}
\renewcommand{\P}{\mathbb{P}}
\newcommand{\Q}{\mathbb{Q}}
\renewcommand{\Pr}{\mathbb{P}_{\mathrm{real}}}
\newcommand{\Pg}{\mathbb{P}_G}
\newcommand{\Pe}{\mathbb{P}_E}
\newcommand{\wass}{\mathcal{W}}
\newcommand{\stein}{\mathcal{S}}
\newcommand{\cF}{\mathcal{F}}
\newcommand{\cA}{\mathcal{A}}
\newcommand{\cX}{\mathcal{X}}
\newcommand{\bx}{\mathbf{x}}
\newcommand{\by}{\mathbf{y}}
\newcommand{\bz}{\mathbf{z}}
\newcommand{\bbf}{\mathbf{f}}

\newcommand{\supp}{\mathrm{supp}}

\newcommand{\tx}{\widetilde{x}}
\newcommand{\norm}[1]{\left\lVert#1\right\rVert}

\title{Bridging Explicit and Implicit Deep Generative Models via Neural Stein Estimators}
%

\author{
    Qitian Wu$^{1*}$ \quad Rui Gao$^{2}$\thanks{Part of the work was done when the two authors were visiting The Chinese University of Hong Kong, Shenzhen.} \quad Hongyuan Zha$^{1,3}$\\
    $^{1}$Department of Computer Science and Engineering, \\
    MoE Key Lab of Artificial Intelligence, AI Institute, Shanghai Jiao Tong University\\
  $^{2}$University of Texas at Austin \\
  $^{3}$School of Data Science, Shenzhen Institute of Artificial Intelligence and Robotics for Society, \\
  The Chinese University of Hong Kong, Shenzhen \\
  \texttt{echo@sjtu.edu.cn, rui.gao@mccombs.utexas.edu, zhahy@cuhk.edu.cn}
  
}
\begin{document}

\maketitle

\begin{abstract}
There are two types of deep generative models: explicit and implicit. The former defines an explicit density form that allows likelihood inference; while the latter
targets a flexible transformation from random noise to generated samples. 
While the two classes of generative models have shown great power in many applications, both of them, when used alone, suffer from respective limitations and drawbacks.
To take full advantages of both models and enable mutual compensation, we propose a novel joint training framework that bridges an explicit (unnormalized) density estimator and an implicit sample generator via Stein discrepancy. We show that our method 1) induces novel mutual regularization via kernel Sobolev norm penalization and Moreau-Yosida regularization, and 2) stabilizes the training dynamics. Empirically, we demonstrate that proposed method can facilitate the density estimator to more accurately identify data modes and guide the generator to output higher-quality samples, comparing with training a single counterpart. The new approach also shows promising results when the training samples are contaminated or limited.
\end{abstract}

\section{Introduction}

Deep generative model, as a powerful unsupervised framework for learning the distribution of high-dimensional multi-modal data, has been extensively studied in recent literature. Typically, there are two types of generative models: explicit and implicit. Explicit models define a density function of the distribution \cite{DEM1,DEM-a1,DEM-a2}, while implicit models 
learn a mapping that generates samples by transforming an easy-to-sample random variable \cite{GAN,DCGAN,WGAN,BigGAN}.

Both models have their own power and limitations.
The density form in explicit models endows them with convenience to characterize data distribution and infer the sample likelihood. 
However, the unknown normalizing constant often causes computational intractability.
On the other hand, implicit models including generative adversarial networks (GANs) can directly generate vivid samples in various application domains including images, natural languages, graphs, etc.
Nevertheless, one important challenge is to design a 
training algorithm that do not suffer from instability and mode collapse.
In view of this, it is natural to build a unified framework that takes full advantages of the two models and encourages them to compensate for each other.


Intuitively, an explicit density estimator and a flexible implicit sampler could help each other's training given effective information sharing.
On the one hand, 
the density estimation given by explicit models can be a good metric that measures quality of samples \cite{EGAN}, and thus can be used for scoring generated samples given by implicit model or detecting outliers as well as noises in input true samples \cite{DEM-anomaly}.
On the other hand, the generated samples from implicit models could augment the dataset and help to alleviate mode collapse especially when true samples are insufficient that would possibly make explicit model fail to capture an accurate distribution.
We refer to Appendix \ref{appx-literatures} for a more comprehensive literature review.

Motivated by the discussions above, in this paper, we propose a joint learning framework that enables mutual calibration between explicit and implicit generative models. In our framework, an explicit model is used to estimate the unnormalized density; in the meantime, an implicit generator model is exploited to minimize certain statistical distance (such as the Wasserstein metric or Jensen-Shannon divergence) between the distributions of the true and the generated samples.
On top of these two models, a Stein discrepancy, acting as a \emph{bridge} between generated samples and estimated densities, is introduced to push the two models to achieve a consensus.
Unlike flow-based models \cite{DEM-gf,kingma2018glow}, our formulation does not impose invertibility constraints on the generative models and thus is flexible in utilizing general neural network architectures.
Our main contributions are as follows:

i) Theoretically, we prove that our method allows the two generative models to impose novel mutual regularization on each other.
  Specifically, our formulation penalizes large kernel Sobolev norm of the critic in the implicit (WGAN) model, which ensures the critic not to change suddenly on the high-density regions and thus preventing the critic of the implicit model being too strong during training.
  In the mean time, our formulation also
  smooths the function given by the Stein discrepancy through Moreau-Yosida regularization, which encourages the explicit model to seek more modes in the data distribution and thus alleviates mode collapse.

ii) In addition, we show that our joint training helps to stabilize the training dynamics. Compared with other common regularization approaches for GAN models that may shift original optimum, our method can facilitate convergence to unbiased model distribution.
 
iii) We conduct comprehensive experiments to justify our theoretical findings and demonstrate that joint training can help two models achieve better performance. On the one hand, the energy model can detect complicated modes in data more accurately and distinguish out-of-distribution samples.
  On the other hand, the implicit model can generate higher-quality samples.


\section{Background}\label{sec:background}

\textbf{Energy Model.}
The energy model assigns each data $\mathbf x\in \mathbb R^d$ with a scalar energy value $E_\phi(\bx)$, where $E_\phi(\cdot)$ is called energy function and parameterized by $\phi$.
The model is expected to assign low energy to true samples according to a Gibbs distribution $p_\phi(\bx)=\exp\{-E_\phi(\bx)\}/Z_\phi$, where 
$Z_\phi$ is a normalizing constant dependent of $\phi$. 
The term $Z_\phi$ is often hard to compute, making optimization intractable, and various methods are proposed to detour such term (see Appendix~\ref{appx-literatures}).

\textbf{Stein Discrepancy.}
Stein discrepancy \cite{SteinDis,Steindis2,Steindis3} is a measure of closeness between two probability distributions and does not require knowledge for the normalizing constant of one of the compared distributions.
Let $\P$ and $\Q$ be two probability distributions on $\cX\subset\R^d$, and assume $\Q$ has a (unnormalized) density $q$.
The Stein discrepancy $\stein(\P,\Q)$ is defined as
\begin{equation}\label{eqn-steindis}
 \stein(\P,\Q) := \sup_{\bbf\in\cF}~\E_{\bx\sim\P}[\cA_{\Q}\bbf(\bx)]  = \sup_{\bbf \in \cF}~ \Gamma(\E_{\bx\sim \P} [\nabla_{\mathbf x} \log q(\mathbf x) \mathbf f(\mathbf x)^\top + \nabla_{\mathbf x}\mathbf f(\mathbf x)]), 
\end{equation}
where $\cF$ is often chosen to be a Stein class (see, e.g., Definition 2.1 in \cite{SteinDis}), $\bbf: \mathbb R^d\rightarrow \mathbb R^{d'}$ is a vector-valued function called \emph{Stein critic} and $\Gamma$ is an operator that transforms a $d\times d'$ matrix into a scalar value. One common choice of $\Gamma$ is trace operator when $d'=d$. One can also use other forms for $\Gamma$, like matrix norm when $d'\neq d$ \cite{SteinDis}. If $\cF$ is a unit ball in some reproducing kernel Hilbert space (RKHS) with a positive definite kernel $k$, it induces Kernel Stein Discrepancy (KSD). More details are in Appendix~\ref{appx-stein}.

\textbf{Wasserstein Metric.}
Wasserstein metric is suitable for measuring distances between two distributions with non-overlapping supports \cite{WGAN}.
The Wasserstein-1 metric between $\P$ and $\Q$ is
\[\wass(\P, \Q):=\min_{\gamma}\ \E_{(\bx,\by)\sim\gamma} [\norm{\bx-\by}],
\]
where the minimization with respect to $\gamma$ is over all joint distributions with marginals $\P$ and $\Q$.
By Kantorovich-Rubinstein duality, $\wass(\P, \Q)$ has a dual representation
\begin{equation}\label{eq:wasserstein-dual}
\wass(\P,\Q):=\max_D\ \left\{\E_{\bx\sim\P}[D(\bx)] - \E_{\by\sim\Q}[D(\by)] \right\},
\end{equation}
where the maximization is over all 1-Lipschitz continuous functions.

\textbf{Sobolev space and Sobolev dual norm.}\label{sec:sobolev}
Let $L^2(\P)$ be the Hilbert space on $\R^d$ equipped with an inner product $\langle u,v \rangle_{L^2(\P)}:= \int_{\R^d} uv d\P(\bx)$.
The (weighted) Sobolev space $H^1$ is defined as the closure of $C_0^\infty$, a set of smooth functions on $\R^d$ with compact support, with respect to norm $\norm{u}_{H^1}:= \big(\int_{\R^d} (u^2 + \norm{\nabla u}_2^2 ) d
\P(\bx)\big)^{1/2}$, where $\P$ has a density.
For $v\in L^2$, its Sobolev dual norm $\norm{v}_{H^{-1}}$ is defined by \cite{evans2010partial}
\begin{equation}
\norm{v}_{H^{-1}} := 
\sup_{u\in H^1}\bigg\{\langle v,u \rangle_{L^2}: \int_{\R^d}\norm{\nabla u}_2^2d\P(\bx)\le1,\\ \int_{\R^d} u(\bx)d\P(\bx)=0 \bigg\}.  \nonumber
\end{equation}
The constraint $\int_{\R^d} u(\bx)d\bx=0$ is necessary to guarantee the finiteness of the supremum, and the supermum can be equivalently taken over $C_0^\infty$.

\section{Proposed Model: Stein Bridging}

\begin{figure*}[t]
    \begin{minipage}{0.6\linewidth}
   \centering
		\includegraphics[width=0.99\textwidth]{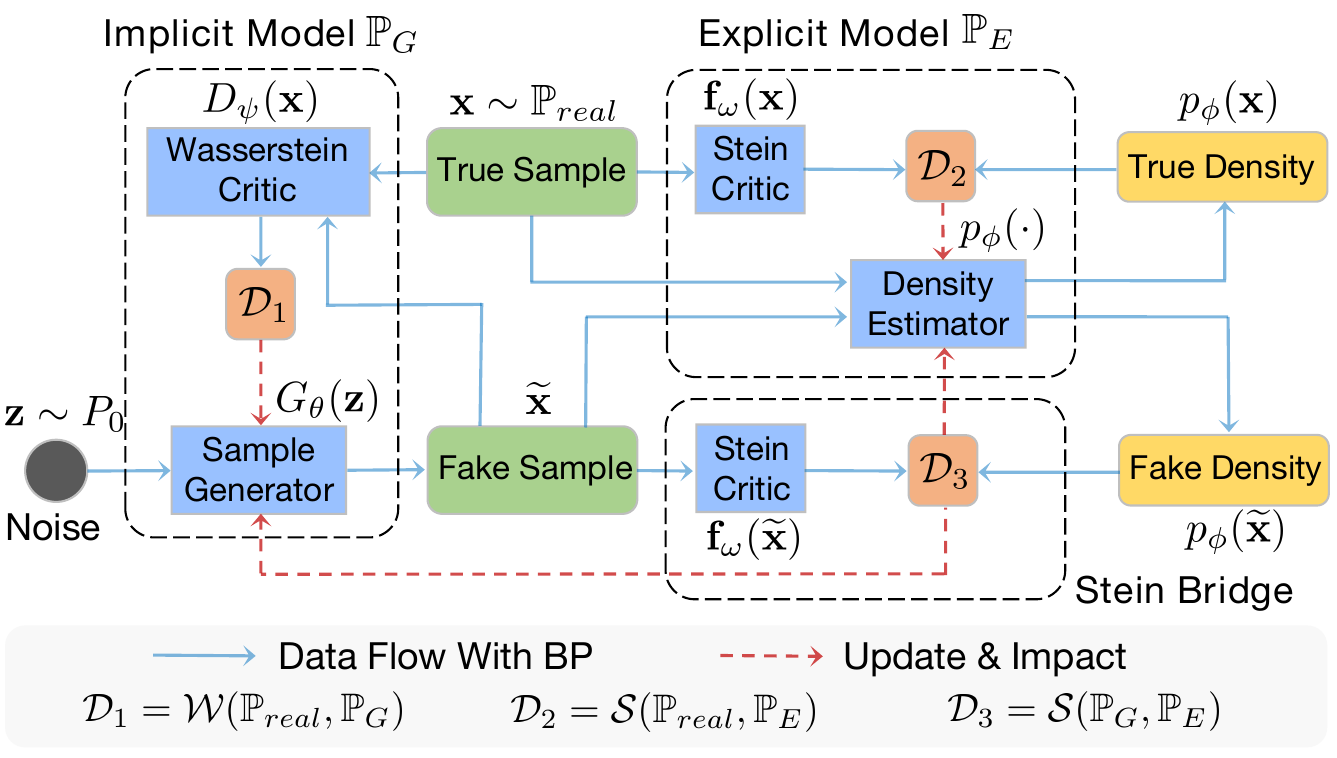}
		\vspace{-5pt}
		\caption{Model framework for \emph{Stein Bridging}.}
		\label{fig-framework}
    \end{minipage}
    \hspace{5pt}
    \begin{minipage}{0.39\linewidth}
    \centering
	\captionof{table}{Comparison of objectives between different generative models, where $\mathcal D_1:=\mathcal D_1(\Pr, \Pg)$, $\mathcal D_2:=\mathcal D_2(\Pr, \Pe)$ and $\mathcal D_3:= \mathcal D_3(\mathbb P_G, \mathbb P_E)$ denote general statistical distances between two distributions.
	}
	\centering
		\scriptsize
		\vspace{-5pt}
		\begin{tabular}{cc}
			\toprule[1pt]
			Model &  Objective  \\
			\specialrule{0em}{1pt}{1pt}
			\hline
			\specialrule{0em}{1pt}{1pt}
			GAN \cite{GAN} & $\mathcal D_1$ \\
			Energy Model \cite{DEM1} & $\mathcal D_2$  \\
			Energy-based GAN \cite{EBGAN} & $\mathcal D_1$ \\
			Contrastive Divergence \cite{DGM} & $\mathcal D_2$\\
			Cooperative Learning \cite{CoopTrain1} & $\mathcal D_2+\mathcal D_3$  \\
			Two Generator Game \cite{EGAN-a1} & $\mathcal D_2+\mathcal D_3$  \\
			Stein Bridging (ours) & $\mathcal D_1+\mathcal D_2+\mathcal D_3$ \\
			\hline
		\end{tabular}
		\label{tbl-compare}
    \end{minipage}
    \vspace{-5pt}
\end{figure*}


In this section, we formulate our model \emph{Stein Bridging}.
A scheme of our framework is illustrated in Figure \ref{fig-framework}.
Denote by $\Pr$ the underlying real distribution from which the data $\{\mathbf x\}$ are sampled.
The formulation simultaneously learns two generative models -- one explicit and one implicit -- that represent estimates of $\Pr$.
The explicit generative model has a distribution $\Pe$ on $\mathcal X$ with explicit probability density proportional to 
$\exp(-E(\bx))$, $\bx\in\mathcal X$, where $E$ is referred to as an energy function. We focus on energy-based explicit model in model formulation as it does not enforce any constraints or assume specific density forms. For specifications, one can also consider other explicit models, like autoregressive models or directly using some density forms such as Gaussian distribution with given domain knowledge. 
The implicit model transforms an easy-to-sample random noise $\bz$ with distribution $P_0$ via a generator $G$ to a sample $\tx=G(\bz)$ with distribution $\Pg$. Note that for distribution $\Pe$, we have its explicit density without normalizing term, while for $\Pg$ and $\Pr$, we have samples from two distributions. Hence, we can use the Stein discrepancy (that does \textit{not} require the normalizing constant) as a measure of closeness between the explicit distribution $\Pe$ and the real distribution $\Pr$, and use the Wasserstein metric (that only requires only samples from two distributions) as a measure of closeness between the implicit distribution $\Pg$ and the real data  distribution $\Pr$.

To jointly learn the two generative models $\Pg$ and $\Pe$, arguably a straightforward way is to minimize the sum of the Stein discrepancy and the Wasserstein metric:
\[
\min_{E,G}\   \wass(\Pr,\Pg) + \lambda\stein(\Pr,\Pe),  
\]
where $\lambda\ge0$.
However, this approach appears no different than learning the two generative models separately.
To achieve information sharing between two models, we incorporate another term $\stein(\Pg,\Pe)$ -- called the \textit{Stein bridge} -- that measures the closeness between the explicit distribution $\Pe$ and the implicit distribution $\Pg$:
\begin{equation}\label{eqn-joint} 
\min_{E,G}\ \wass(\Pr,\Pg) + \lambda_1 \stein(\Pr,\Pe) + \lambda_2 \stein(\Pg,\Pe),
\end{equation}
where $\lambda_1,\lambda_2\geq0$.
The Stein bridge term in (\ref{eqn-joint}) pushes the two models to achieve a consensus.

\textbf{Remark 1}. 
Our formulation is flexible in choosing both the implicit and explicit models.
In (\ref{eqn-joint}), we can choose statistical distances other than the Wasserstein metric $\wass(\Pr,\Pg)$ to measure closeness between $\Pr$ and $\Pg$, such as Jensen-Shannon divergence, as long as its computation requires only samples from the involved two distributions.
Hence, one can use GAN architectures other than WGAN to parametrize the implicit model.
In addition, one can replace the first Stein discrepancy term $\stein(\Pr,\Pe)$ in (\ref{eqn-joint}) by other statistical distances as long as its computation is efficient and hence other explicit models can be used. For instance,
if the normalizing constant of $\Pe$ is known or easy to calculate, one can use Kullback-Leibler (KL) divergence.

\textbf{Remark 2}. 
The choice of the Stein discrepancy for the bridging term $\stein(\Pg,\Pe)$ is crucial and cannot be replaced by other statistical distances such as KL divergence, since the data-generating distribution does not have an explicit density form (not even up to a normalizing constant). This is exactly one important reason why Stein bridging was proposed, which requires only samples from the data distribution and only the log-density of the explicit model without the knowledge of normalizing constant as estimated in MCMC or other methods.

In our implementation, we parametrize the generator in implicit model and the density estimator in explicit model as $G_\theta(\mathbf z)$ and $p_\phi(\mathbf x)$, respectively. The Wasserstein term in (\ref{eqn-joint}) is implemented using its equivalent dual representation in (\ref{eq:wasserstein-dual}) with a parametrized critic $D_\psi(\mathbf x)$.
The two Stein terms in (\ref{eqn-joint}) can be implemented using (\ref{eqn-steindis}) with either a Stein critic (parametrized as a neural network, i.e., $\mathbf f_w(\mathbf x)$), or the non-parametric Kernel Stein Discrepancy.
Our implementation iteratively updates the explicit and implicit models. 
Details for model specifications and optimization are in Appendix~\ref{appx-specific}.  

\textbf{Comparison with Existing Works.} There are several studies that attempt to combine explicit and implicit generative models from different ways, e.g. by energy-based GAN \cite{EBGAN}, contrastive divergence \cite{DGM,EGAN}, cooperative learning \cite{CoopTrain1} or two generator game \cite{EGAN-a1}. Here we provide a high-level comparison in Table \ref{tbl-compare} where we note that the formulations of existing works only consider one-side discrepancy or at most two discrepancy terms. Such formulations cannot address the respective issues for both models and, even worse, the training for two models would constrain rather than exactly compensate each other (more discussions are in Appendix~\ref{appx-related}). Differently, our model considers three discrepancies simultaneously as a triangle to jointly optimize two generative models. In the following, we will show that such new simple formulation enables two models to compensate each other via mutual regularization effects and stabilize the training dynamics.

\section{Theoretical Analysis}\label{sec:theory}

In this section, we provide theoretical insights on proposed scheme, which illuminate its mutual regularization effects as a justification of our joint training and further show its merit for stabilizing the training dynamics. The proofs for all the results in this section are in Appendix~\ref{appx-proofs}.

\subsection{Mutual Regularization Effects.}\label{sec:reg}

We first show the regularization effect of the Stein bridge on the Wasserstein critic.
Define the \textit{kernel Sobolev dual norm} as
\begin{equation}
    \norm{D}_{H^{-1}(\P;k)} := \sup_{u\in C_0^\infty}\{\langle D,u \rangle_{L^2(\P)}:
\E_{\bx,\bx'\sim\P}[\nabla u(\bx)^\top k(\bx,\bx')\nabla u(\bx')]\le1,\  \E_{\P}[u]=0\}. \nonumber
\end{equation}
It can be viewed as a kernel generalization of the Sobolev dual norm defined in Section \ref{sec:sobolev}, which reduces to the Sobolev dual norm when $k(\bx,\bx')=\mathbb{I}(\bx=\bx')$ and $\P$ is the Lebesgue measure.
\begin{theorem}\label{thm:reg_H-1}
	Assume that $\{\Pg\}_G$ exhausts all continuous probability distributions and $\stein$ is chosen as kernel Stein discrepancy. Then problem (\ref{eqn-joint}) is equivalent to
	\begin{equation}
	\min_E\max_D\Big\{   \E_{\by\sim\Pe}[D(\by)] - \E_{\bx\sim\Pr}[D(\bx)]
	 - \textstyle{\frac{1}{4\lambda_2}} \norm{D}_{H^{-1}(\Pe;k)}^2 + \lambda_1\stein(\Pr,\Pe) \Big\}. \nonumber
	\end{equation}
\end{theorem}
The kernel Sobolev norm regularization penalizes large variation of the Wasserstein critic $D$.
Particularly, observe that \cite{villani2008optimal} if $k(\bx,\bx')=\mathbb{I}(\bx=\bx')$ and $\E_{\Pe}[D]=0$, and then 
\[
\norm{D}_{H^{-1}(\Pe;k)}= \lim_{\epsilon\to0} \frac{\wass_2((1+\epsilon D)\Pe,\Pe)}{\epsilon},
\]
where $\wass_2$ denotes the 2-Wasserstein metric.
Hence, the Sobolev dual norm regularization ensures $D$ not to change suddenly on high-density region of $\Pe$, and thus reinforces the learning of the Wasserstein critic.
Stein bridge penalizes large variation of the Wasserstein critic, in the same spirit but of different form comparing to gradient-based penalty (e.g., \cite{WGAN-GP,roth2017stabilizing}).
It prevents Wasserstein critic from being too strong during training and thus encourages mode exploration of sample generator.
To illustrate this, we conduct a case study where we train a generator over the data sampled from a mixture of Gaussian ($\mu_1=[-1, -1]$, $\mu_2=[1,1]$ and $\Sigma=0.2 \mathbf I$). In Fig. \ref{fig-numres1}(a) we compare gradient norms of the Wasserstein critic when training the generator with and without the Stein bridge. As we can see, Stein bridge can help to reduce gradient norms, with a similar effect as WGAN-GP.

Moreover, the Stein bridge also plays a part in smoothing the output from Stein discrepancy and we show the result in the following theorem.
\begin{theorem}\label{thm:reg_Lip}
	Assume $\{\Pg\}_G$ exhausts all continuous probability distributions, and the Stein class defining the Stein discrepancy is compact (in some linear topological space). Then problem (\ref{eqn-joint}) is equivalent to
	\[
	\min_E  \Big\{\lambda_1\stein(\Pr,\Pe) +\lambda_2 \max_{\bbf} \E_{\bx\sim \Pr}[(\cA_{\Pe}\bbf)_{\lambda_2}(\bx)]\Big\},
	\] 
	where $(\cA_{\Pe}\bbf)_{\lambda_2}(\cdot)$ denotes the (generalized) Moreau-Yosida regularization of function $ \cA_{\Pe}\bbf$ with parameter $\lambda_2$, i.e., $(\cA_{\Pe}\bbf)_{\lambda_2}(\bx) = \min_{\by\in\cX}\{ \cA_{\Pe}\bbf(\by) + \frac{1}{\lambda_2} ||\bx-\by|| \}$. 
\end{theorem}

\begin{wrapfigure}{r}{0.53\textwidth}
\centering
\vspace{-10pt}
\begin{minipage}[t]{\linewidth}
	\centering
\subfigure[]{
\begin{minipage}[t]{0.32\linewidth}
\includegraphics[width=\textwidth,angle=0]{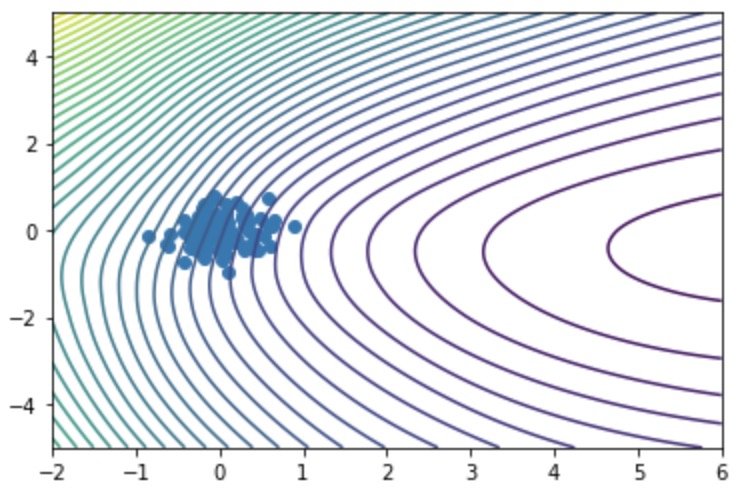}
\end{minipage}%
}%
\subfigure[]{
\begin{minipage}[t]{0.32\linewidth}
\centering
\includegraphics[width=\textwidth,angle=0]{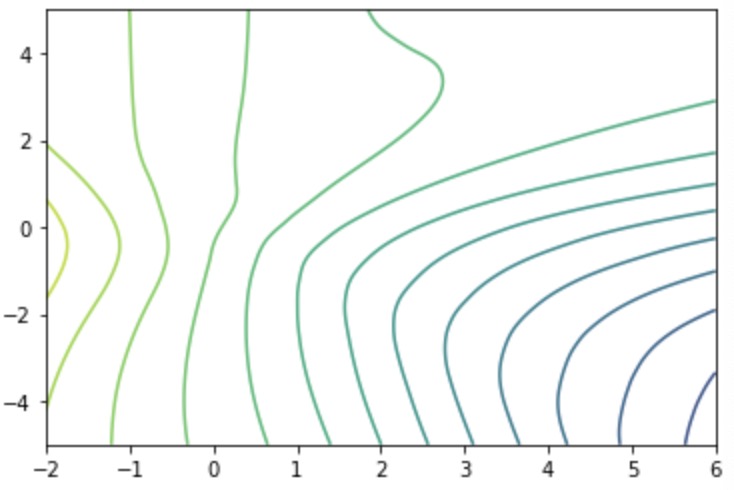}
\end{minipage}%
}%
\subfigure[]{
\begin{minipage}[t]{0.32\linewidth}
\centering
\includegraphics[width=\textwidth,angle=0]{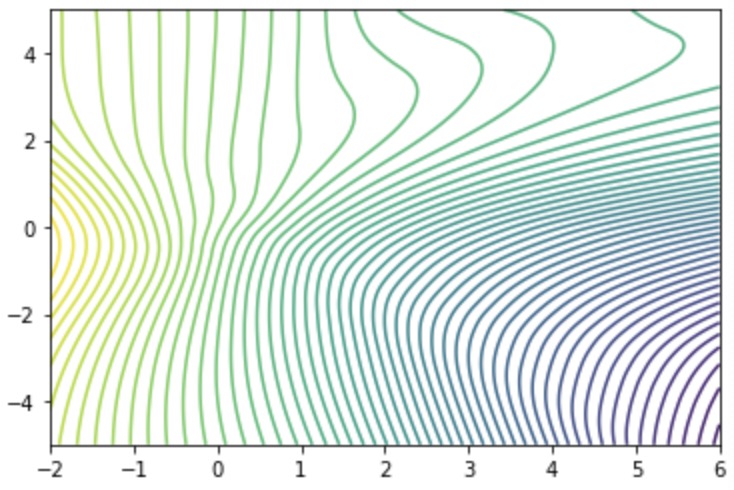}
\end{minipage}%
}%
\vspace{-10pt}
\end{minipage}
\caption{
(a) Contour of an energy model with one mode and empirical data from a distribution with a different mode (blue dots);
(b) \& (c) Contours of the Stein critics between the two distributions learned w/ and w/o the Stein bridge, respectively.
}\label{fig:reg}
\vspace{-5pt}
\end{wrapfigure}

Note that $(\cA_{\Pe}\bbf)_{\lambda_2}$ is Lipschitz continuous with constant $1/\lambda_2$.
Hence, the Stein bridge, together with the Wasserstein metric $\wass(\Pr,\Pg)$, plays as a Lipschitz regularization on the output of the Stein operator $\cA_{\Pe}\bbf$ via Moreau-Yosida regularization. This suggests a novel regularization scheme for Stein-based GAN.
By smoothing the Stein critic, the Stein bridge encourages the energy model to seek more modes in data instead of focusing on some dominated modes, thus alleviating mode-collapse issue. 
To illustrate this, we consider a case where we have an energy model initialized with one mode center and data sampled from distribution of another mode, as depicted in Fig. \ref{fig:reg}(a).
Fig. \ref{fig:reg}(b) and \ref{fig:reg}(c) compare the Stein critics when using Stein bridge and not, respectively. The Stein bridge helps to smooth the Stein critic, as indicated by a less rapidly changing contour in Fig. \ref{fig:reg}(b) compared to Fig. \ref{fig:reg}(c), learned from the data and model distributions plotted in Fig. \ref{fig:reg}(b).



\subsection{Stability of training dynamics.}\label{sec:convergence}

We further show that Stein Bridging could stabilize adversarial training between generator and Wasserstein critic with a local convergence guarantee. As is known, the training for minimax game in GAN is difficult. When using traditional gradient methods, the training would suffer from some oscillatory behaviors \cite{GAN-tutorial,GAN-stable1,ConvergBilinear}. In order to better understand the optimization behaviors, we first compare the behaviors of WGAN, likelihood- and entropy-regularized WGAN, and our Stein Bridging under SGD via an easy to comprehend toy example in one-dimensional case. Such a toy example (or a similar one) is also utilized by \cite{GAN-stable2, GAN-stable4} to shed lights on the instability of WGAN training\footnote{Our theoretical discussions focus on WGAN, and we also compare with original GAN in the experiments.}. Consider a linear critic $D_\psi(x)=\psi x$ and generator $G_\theta(z)=\theta x$. Then the Wasserstein GAN objective can be written as a constrained bilinear problem:
$\min_\theta \max_{|\psi|\leq 1} \psi \mathbb E[x] - \psi \theta\mathbb E[z]$, which could be further simplified as an unconstrained version (the behaviors can be generalized to multi-dimensional cases \cite{GAN-stable2}):
\begin{equation}\label{eqn-bili}
\min\limits_\theta \max\limits_\psi \psi - \psi \cdot \theta.
\end{equation}
Unfortunately, such simple objective cannot guarantee convergence by traditional gradient methods like SGD with alternate updating\footnote{Here, we adopt the most widely used alternate updating strategy. The simultaneous updating, i.e., $\theta_{k+1} = \theta_k + \eta\psi_k$ and $\psi_{k+1} = \psi_k + \eta(1-\theta_k)$, would diverge in this case.}:
$
\theta_{k+1} = \theta_k + \eta\psi_k,
$,
$
\psi_{k+1} = \psi_k + \eta(1-\theta_{k+1}).
$
Such optimization would suffer from an oscillatory behavior, i.e., the updated parameters go around the optimum point ($[\psi^*,\theta^*]=[0,1]$) forming a circle without converging to the centrality, which is shown in Fig.~\ref{fig-numres1}(b). A recent study in \cite{GAN-stable1} theoretically show that such oscillation is due to the interaction term in (\ref{eqn-bili}).

\begin{figure}[t]
\centering
\subfigure[]{
			\centering
			\includegraphics[width=0.31\textwidth]{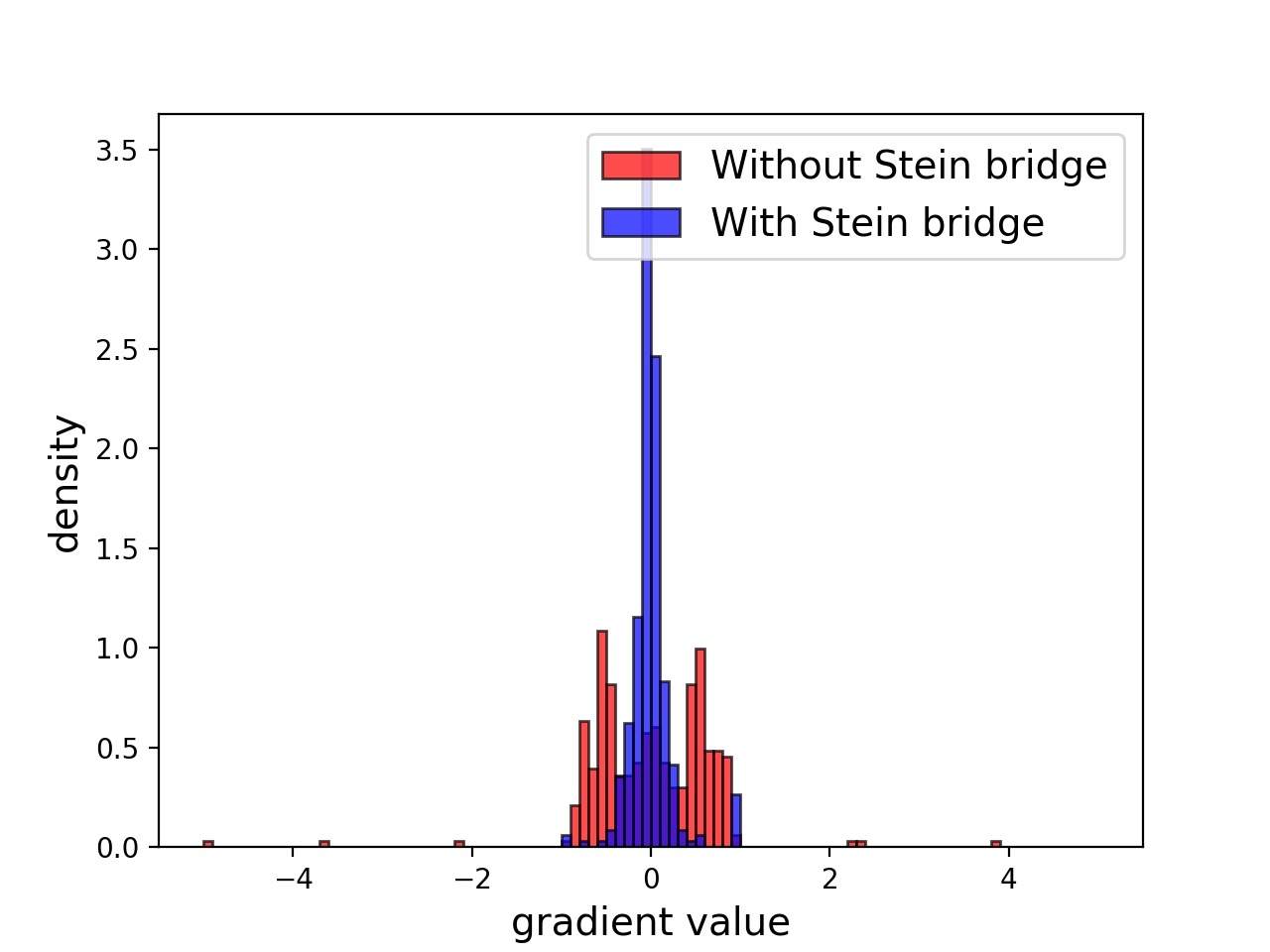}
			\vspace{-15pt}
		}%
		\subfigure[]{
			\centering
			\includegraphics[width=0.31\textwidth]{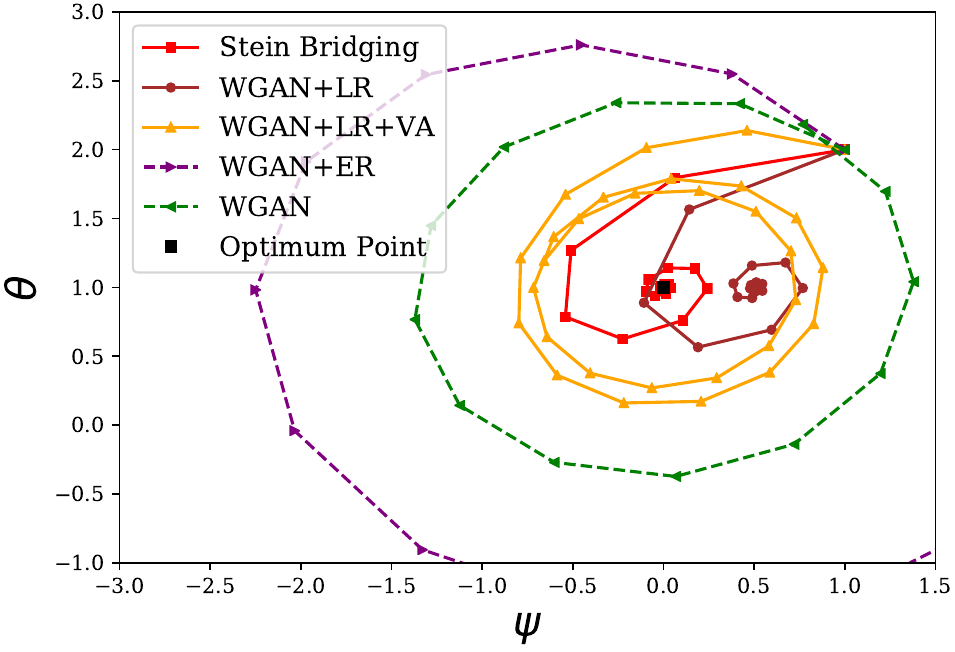}
			\vspace{-15pt}
		}%
		\subfigure[]{
			\centering
			\includegraphics[width=0.31\textwidth]{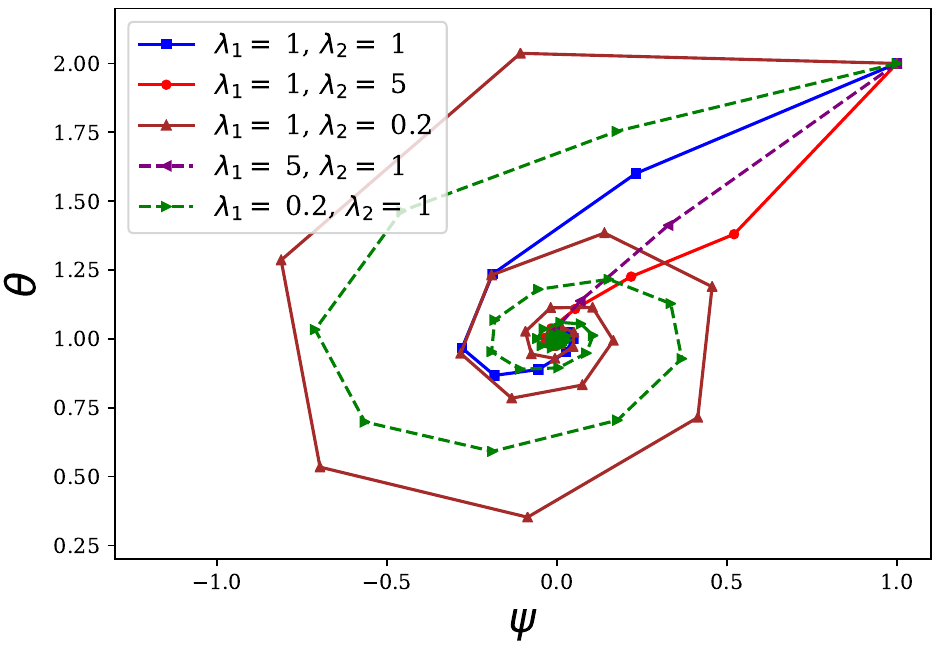}
			\vspace{-15pt}
		}%
	\vspace{-10pt}
	\caption{(a) The gradient norm of Wasserstein critic with (blue) and without (red) the Stein bridge when data are sampled from a mixture of Gaussian. (b) Numerical SGD updates of Stein Bridging, WGAN and its variants with different regularizations.}
	\label{fig-numres1}
	\vspace{-13pt}
\end{figure}

One solution to the instability of GAN training is to add (likelihood) regularization, which has been widely studied by recent literatures \cite{GAN-lr, GradEst}. 
With regularization term, the objective changes into
$
\min_\theta \max_{|\psi|\leq 1} \psi \mathbb E[x] - \psi \theta\mathbb E[z] - \lambda\mathbb E[\log \mu(\theta z)],
$
where $\mu(\cdot)$ denotes the likelihood function and $\lambda$ is a hyperparameter.
A recent study \cite{GAN-VA} proves that when $\lambda<0$ (likelihood-regularization), the extra term is equivalent to maximizing sample evidence, helping to stabilize GAN training; when $\lambda>0$ (entropy-regularization), the extra term maximizes sample entropy, which encourages diversity of generator. Here we consider a Gaussian likelihood function for generated sample $x'$, $\mu(x')=\exp(-\frac{1}{2}(x'-b)^2)$ which is up to a constant, and the objective becomes (see Appendix~\ref{appx-1dim} for details):
\begin{equation}\label{eqn-bili-r}
\min\limits_\theta \max\limits_\psi\psi - \psi \cdot \theta - \lambda(\theta^2-\theta).
\end{equation}

The above system would converge with $\lambda<0$ and diverge with $\lambda>0$ in gradient-based optimization, shown in Fig.~\ref{fig-numres1}(b). Another issue of likelihood-regularization is that the extra term changes the optimum point and makes the model converge to a biased distribution, as proved by \cite{GAN-VA}. In this case, one can verify that the optimum point becomes $[\psi^*,\theta^*]=[-\lambda,1]$, resulting in a bias. To avoid this issue, \cite{GAN-VA} proposes to temporally decrease $|\lambda|$ through training. However, such method would also be stuck in oscillation when $|\lambda|$ gets close to zero as is shown in Fig.~\ref{fig-numres1}(b). 

Finally, consider our proposed model. We also simplify the density estimator as a basic energy model $p_\phi(x)=\exp(-\frac{1}{2}x^2-\phi x)$ whose score function $\nabla_x \log p_\phi(x)=-x-\phi$. Then if we specify the two Stein discrepancies in (\ref{eqn-joint}) as KSD, the objective is (see Appendix~\ref{appx-1dim} for details),
\begin{equation}\label{eqn-joint-s}
\min\limits_\theta \max\limits_\psi \min\limits_\phi \psi - \psi \cdot \theta + \frac{\lambda_1}{2}(1+\phi)^2 + \frac{\lambda_2}{2}(\theta +\phi)^2.
\end{equation}
Interestingly, for $\forall \lambda_1, \lambda_2$, the optimum remains the same $[\psi^*,\theta^* , \phi^*]=[0, 1, -1]$. 
Then we show that the optimization guarantees convergence to $[\psi^*,\theta^* , \phi^*]$.
\begin{proposition}
	Using alternate SGD for (\ref{eqn-joint-s}) geometrically decreases the square norm $N_t=|\psi^{t}|^2+|\theta-1|^2+|\phi+1|^2$,  for any $0<\eta<1$ with $\lambda_1=\lambda_2=1$,
	\begin{equation}
	N_{t+1} = (1-\eta^2(1-\eta)^2)N_t.
	\end{equation}
\end{proposition}

As shown in Fig.~\ref{fig-numres1}(b), Stein Bridging achieves a good convergence to the right optimum. Compared with (\ref{eqn-bili}), the objective (\ref{eqn-joint-s}) adds a new bilinear term $\phi\cdot\theta$, which acts like a connection between the generator and estimator, and two other quadratic terms, which help to penalize the increasing of values through training. The added terms and original terms in (\ref{eqn-joint-s}) cooperate to guarantee convergence to a unique optimum. (More discussions in Appendix~\ref{appx-1dim}). Moreover, Fig.~\ref{fig-numres1}(c) presents the training dynamics w.r.t. different $\lambda_1$'s and $\lambda_2$'s. As we can see, the convergence can be achieved with different trading-off parameters which in essence have impact on the convergence speed.

We further generalize the analysis to multi-dimensional bilinear system $F(\bm\psi, \bm\theta) = \bm \theta^\top \mathbf A \bm \psi - \mathbf b^\top\bm \theta - \mathbf c^\top\bm \psi$ which is extensively used by researches for analysis of GAN stability \cite{GAN-tutorial, GAN-stable3, GAN-stable1, GAN-stable2}. For any bilinear system, with added term $H(\bm\phi, \bm\theta)=\frac{1}{2}(\bm\theta+\bm\phi)^\top\mathbf B(\bm\theta+\bm\phi)$ where $\mathbf B=(\mathbf A\mathbf A^\top)^{\frac{1}{2}}$ to the objective, we can prove that i) the optimum point remains the same as the original system (Proposition \ref{prop-multi-unchange}) and ii) using alternate SGD algorithm for the new objective can guarantee convergence (Theorem \ref{thm-multi-converge}). More discussions are given in Appendix \ref{appx-bilinear}.

\section{Experiments}\label{sec:exp}

In this section, we conduct experiments to verify the effectiveness of proposed method from multifaceted views. The implementation codes are available at \url{https://github.com/qitianwu/SteinBridging}. 

\subsection{Setup}

We mainly consider evaluation with two tasks: density estimation and sample generation. For density estimation, we expect the model to output estimated density values for input samples and the estimation is supposed to match the ground-truth one. For sample generation, the model aims at generating samples that are akin to the real observed ones. 

We consider two synthetic datasets with mixtures of Gaussian distributions: Two-Circle and Two-Spiral. The first one is composed of 24 Gaussian mixtures that lie in two circles. The second dataset consists of 100 Gaussian mixtures densely arranged on two centrally symmetrical spiral-shaped curves. The ground-truth distributions are shown in Fig.~\ref{fig-ts-res}(a). 
Details for synthetic datasets are in Appendix~\ref{appx-datasets}. Furthermore, we apply the method to MNIST and CIFAR datasets which require the model to deal with high-dimensional image data. 

In each dataset, we use true observed samples as input of the model and leverage them to train our model. In synthetic datasets, we sample $N_1=2000$ and $N_2=5000$ points from the ground-truth distributions as true samples for Two-Circle and Two-Spiral datasets, respectively. The true samples are shown in Fig.~\ref{fig-ts-res} (a). In MNIST and CIFAR, we directly use pictures in the training sets as true samples. The details for each dataset are reported in Appendix \ref{appx-datasets}. 


We term our model \emph{Joint-W} if using Wasserstein metric in (\ref{eqn-joint}) and \emph{Joint-JS} if using JS divergence in this section. As we mentioned, our model is capable for 1) yielding estimated (unnormalized) density values (by the explicit energy model) for input samples and 2) generating samples (by the implicit generative model) from a noise distribution. We consider several competitors for performance comparison. For sample generation, we mainly compare our model with implicit generative models. Specifically, we basically consider the counterparts without joint training with energy model, which are equivalently valina GAN and WGAN with gradient penalty \cite{WGAN-GP}, for ablation study. Also, as comparison to the new regularization effects by Stein Bridging, we consider a recently proposed variational annealing regularization \cite{GAN-VA} for GANs (short as GAN+VA/WGAN+VA) with denoising auto-encoder \cite{AE-reg} to estimate the gradient for regularization penalty. For density estimation, we mainly compare with explicit models. Specifically, we also consider the counterparts without joint training with generator model, i.e., Deep Energy Model (DEM) using Stein discrepancy \cite{Energy-stein}. Besides we compare with energy calibrated GAN (EGAN) \cite{EGAN} and Deep Directed Generative (DGM) Model \cite{DGM} which adopt contrastive divergence to train a sample generator with an energy estimator. The hyper-parameters are tuned according to quantitative metrics (will be discussed later) used for different tasks. See Appendix \ref{appx-implementation} for implementation details.

\begin{figure}[t]
	\centering
	\includegraphics[width=0.8\textwidth]{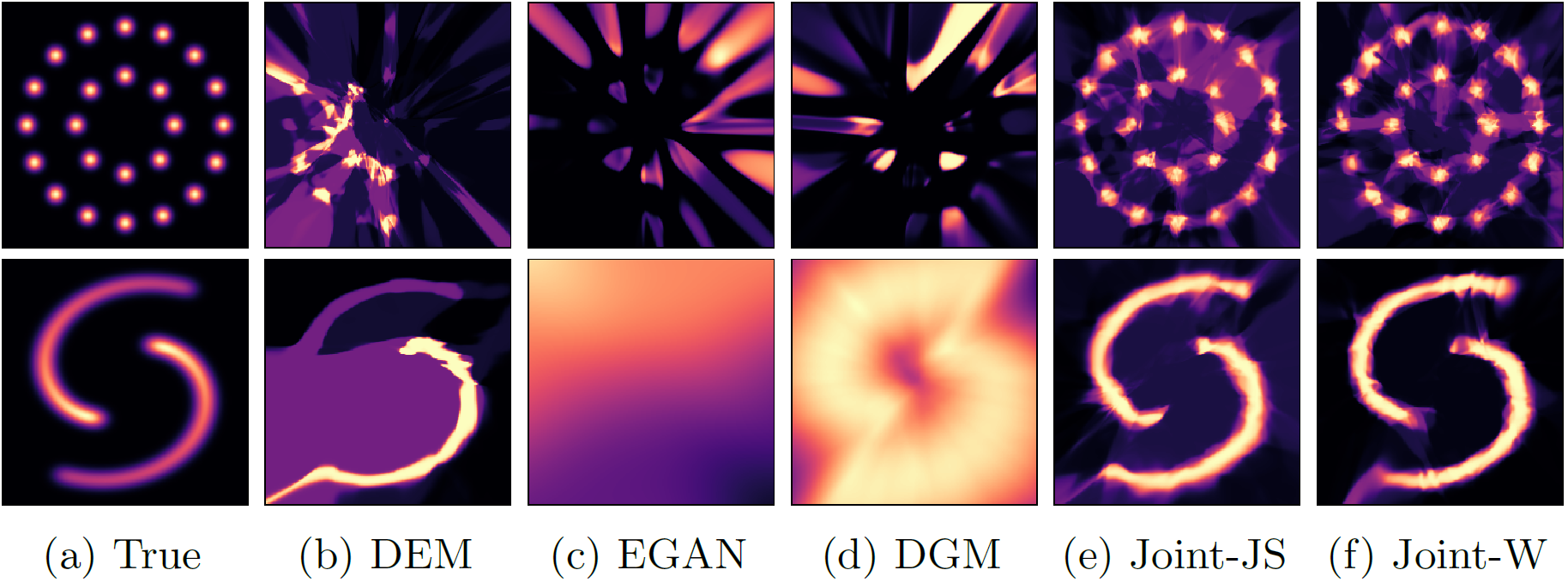}
	\vspace{-5pt}
	\caption{Results for density estimation. (a) Densities of real distribution. (b)$\sim$(f) Estimated densities given by the estimators of different methods on Two-Circle (upper) and Two-Spiral (bottom) datasets.}
	\label{fig-ts-res}
	\vspace{-15pt}
\end{figure}

\subsection{Density Estimation of Explicit Model}
As shown in Two-Circle case in Fig~\ref{fig-tc-res}, both Joint-JS and Joint-W manage to capture all Gaussian components while other methods miss some of modes. In Two-Spiral case in Fig~\ref{fig-ts-res}, Joint-JS and Joint-W exactly fit the ground-truth distribution. Nevertheless, DEM misses one spiral while EGAN degrades to a uniform-like distribution. DGM manages to fit two spirals but allocate high densities to regions that have low densities in the groung-truth distribution. As quantitative comparison, we study three evaluation metrics: KL \& JS divergence and Area Under the Curve (AUC). Detailed information and results are in Appendix \ref{appx-metrics} and Table~\ref{tbl:tcts-res} respectively. The values show that Joint-W and Joint-JS provide better density estimation than all the competitors over a large margin.

\begin{figure}[t]
	\centering
	\includegraphics[width=0.8\textwidth]{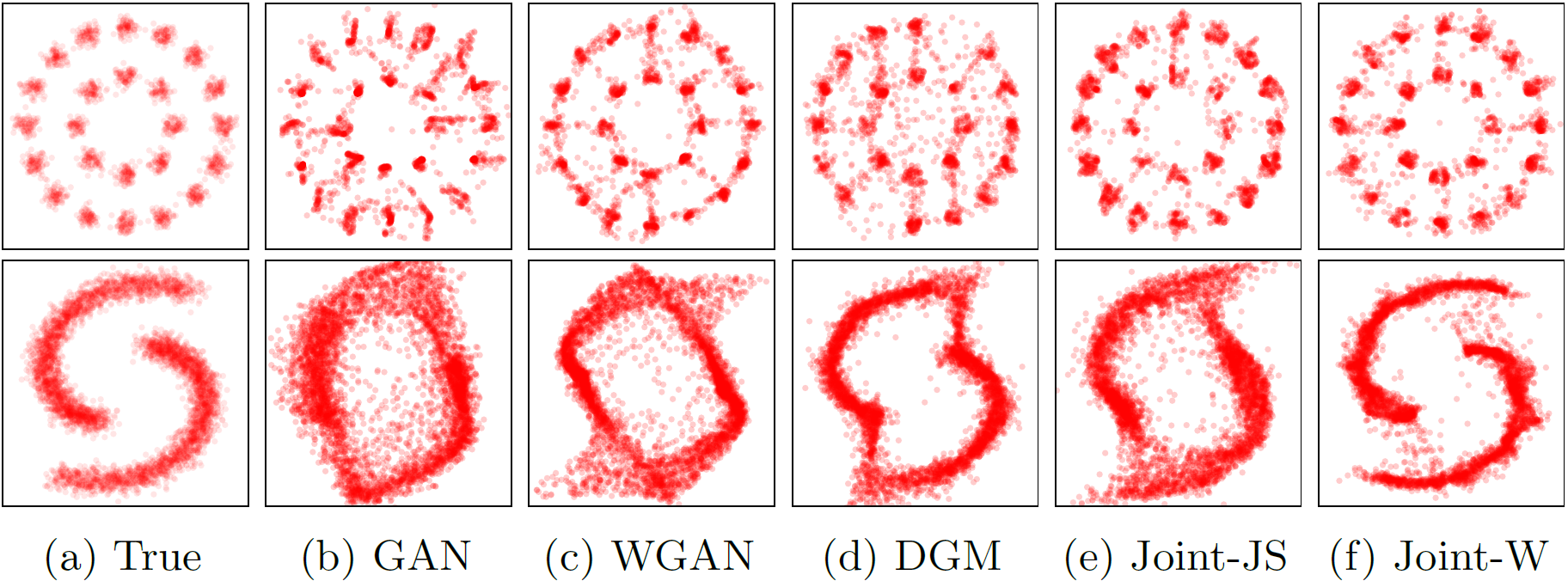}
	\vspace{-5pt}
		\caption{Comparison for sample quality. (a) Samples from real distribution. (b)$\sim$(f) Generated samples produced by the generators of different methods on Two-Circle (upper) and Two-Spiral (bottom) datasets.}
		\label{fig-tc-res}
	\vspace{-5pt}
\end{figure}

\begin{figure*}[t]
    \begin{minipage}{0.38\linewidth}
    \centering
	\captionof{table}{Inception Scores (IS) and Fr\'{e}chet Inception Distance (FID) on CIFAR-10 datasets.}
	\small
	\scalebox{0.88}{
	\begin{tabular}{c|c|c}
		\toprule[1pt]
		\specialrule{0em}{1pt}{1pt}
		Method & IS & FID \\
		\specialrule{0em}{1pt}{1pt} \hline \specialrule{0em}{1pt}{1pt}
		WGAN-GP  & 6.74$\pm$0.041 & 42.2$\pm$0.572 \\
		Energy GAN  & 6.89$\pm$0.081 & 45.6$\pm$0.375 \\
		WGAN+VA   & 6.90$\pm$0.058 & 45.3$\pm$0.307 \\
		DGM & 6.51$\pm$0.041 & 48.8$\pm$0.492 \\
		\textbf{Joint-W(ours)}  & \textbf{7.12}$\pm$0.101 & \textbf{41.0}$\pm$0.546 \\
		\bottomrule[1pt]
	\end{tabular}
	}
	\label{tbl:mnist-cifar-res}
    \end{minipage}
    \hspace{2pt}
    \begin{minipage}{0.33\linewidth}
    \centering
	\captionof{table}{Area Under the Curve (AUC) for OOD detection in CIFAR-10 datasets. 
	}
    \small
	\label{tbl-auc}
	\begin{threeparttable}
\setlength{\tabcolsep}{2mm}{ 
\scalebox{0.88}{
        \begin{tabular}{c|cccc}
            \toprule[1pt]
		\specialrule{0em}{1pt}{1pt}
            Method & I & II & III & IV  \\
            \specialrule{0em}{1.3pt}{1.3pt} \hline \specialrule{0em}{1.3pt}{1.3pt}
            DEM & 0.50 & 0.52 & 0.51 & 0.56  \\
            \specialrule{0em}{1.5pt}{1.5pt}
            DGM & 1.00 & 1.00 & 1.00 & 0.82  \\
            \specialrule{0em}{1.5pt}{1.5pt}
            EGAN & 0.50 & 0.42 & 0.30 & 0.52  \\
            \specialrule{0em}{1.5pt}{1.5pt}
            \textbf{Joint-W} & 0.50 & 0.92 & 0.95 & 0.85  \\
            \bottomrule[1pt]
		\end{tabular}}
		}
	\end{threeparttable}
    \end{minipage}
    \hspace{2pt}
    \begin{minipage}{0.25\linewidth}
   \centering
		\includegraphics[width=0.98\textwidth]{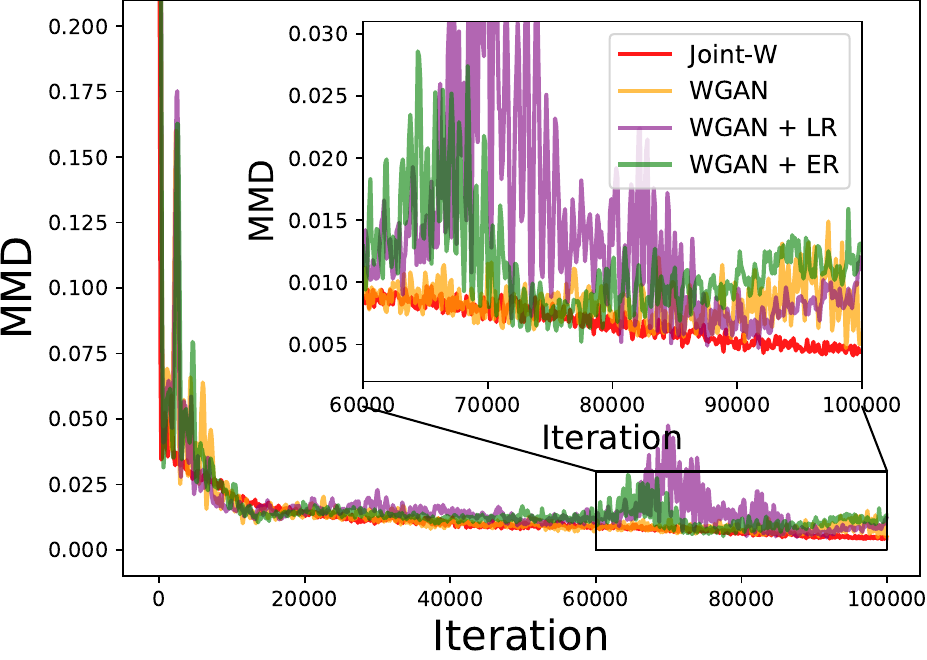}
		\vspace{-5pt}
		\caption{Learning curves in Two-Spiral.}
		\label{fig:stability-result}
    \end{minipage}
    \vspace{-5pt}
\end{figure*}

\begin{figure*}[t]
	\begin{minipage}{0.55\textwidth}
		\includegraphics[width=0.98\textwidth]{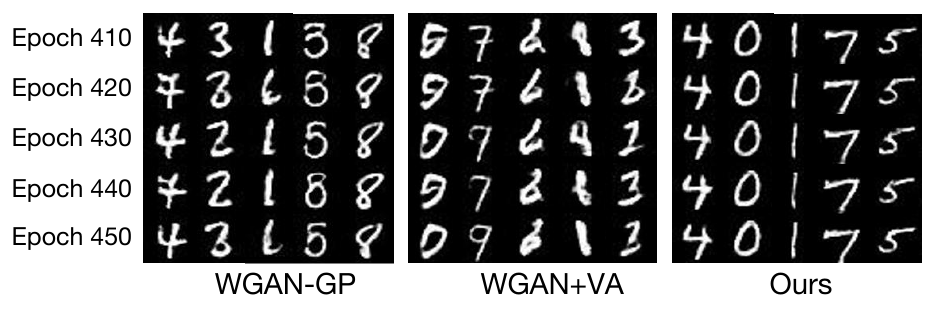}
		\vspace{-6pt}
		\caption{Generated digits given by the same noise $z$ in adjacent training epochs on MNIST.}
		\label{fig-stability-vis}
	\end{minipage}
	\begin{minipage}{0.44\textwidth}
		\makeatletter\def\@captype{figure}\makeatother
		\subfigure[]{
		\centering
		\includegraphics[width=0.49\textwidth]{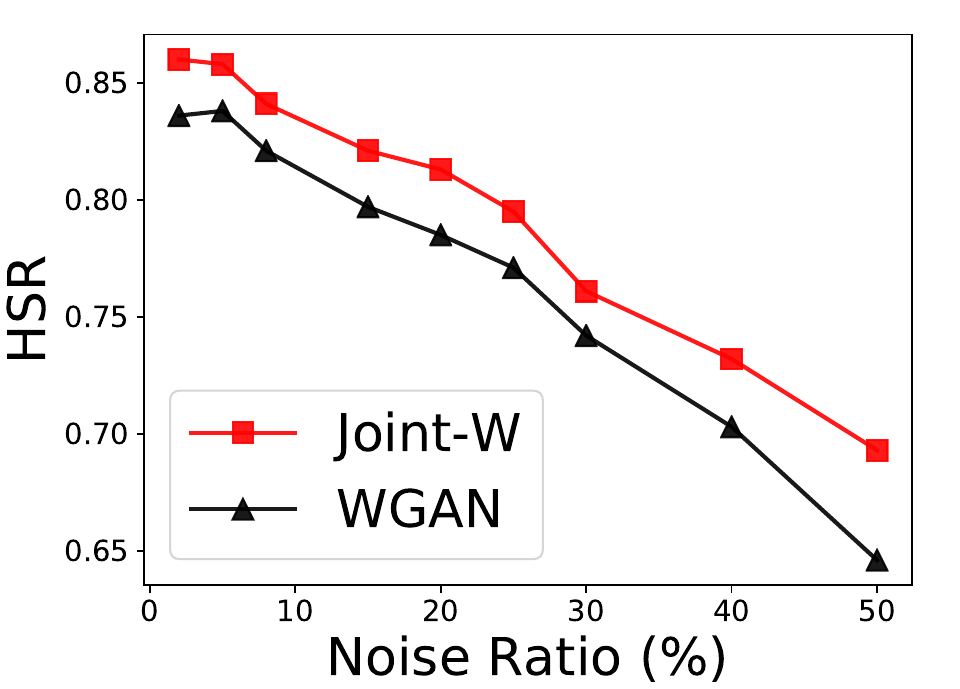}
		\vspace{-17pt}
	}%
	\subfigure[]{
		\centering
		\includegraphics[width=0.49\textwidth]{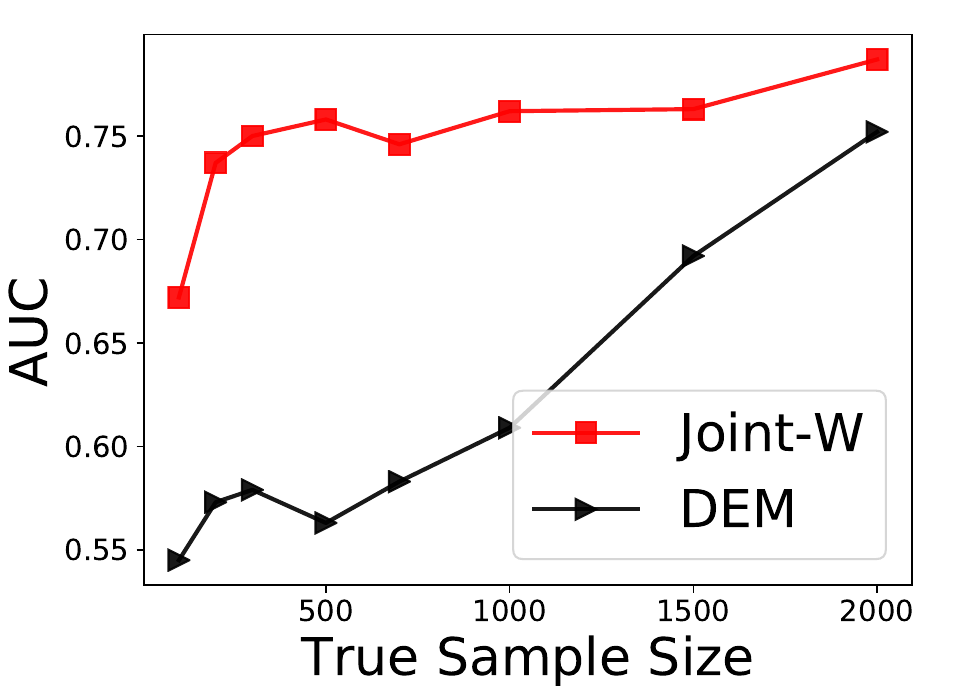}
		\vspace{-17pt}
	}%
	\vspace{-13pt}
	\caption{Impact of (a) noise in data and (b) insufficient data on model performance.}
	\label{fig-noise-insuff-warm}
	\end{minipage}
\vspace{-15pt}
\end{figure*}


We rank generated digits (and true digits) on MNIST w.r.t densities given by the energy model in Fig.~\ref{fig:mnist-joint-rank}, Fig.~\ref{fig:mnist-dgm-rank} and Fig.~\ref{fig:mnist-egan-rank}. As depicted in the figures, the digits with high densities (or low densities) given by Joint-JS possess enough diversity (the thickness, the inclination angles as well as the shapes of digits diverses). By constrast, all the digits with high densities given by DGM tend to be thin and digits with low densities are very thick. Also, as for EGAN, digits with high (or low) densities appear to have the same inclination angle (for high densities, `1' keeps straight and `9' 'leans' to the left while for low densities, just the opposite), which indicates that DGM and EGAN tend to allocate high (or low) densities to data with certain modes and miss some modes that possess high densities in ground-truth distributions. By contrast, our method manages to capture these complicated features in data distributions.

We further study model performance on detection for out-of-distribution (OOD) samples. We consider CIFAR-10 images as positive samples and construct
negative samples by (I) flip images, (II) add random noise, (III) overlay two images and (IV)  use  images 
from LSUN dataset, respectively. A good density models trained on CIFAR-10 are expected to give high densities to positive samples and low densities to negative samples, with exception for case (I) (flipping images are not exactly negative samples and the model should give high densities). We use the density values rank samples and calculate AUC of false positive rate v.s. true positive rate, reported in Table \ref{tbl-auc}. Our model Joint-W manages to distinguish samples for (II), (III), (IV) and is not fooled by flipping images, while DEM and EGAN fail to detect out-of-distribution samples and DGM gives wrong results, recognizing flipping images as negative samples.


\subsection{Sample Quality of Implicit Model}
In Fig.~\ref{fig-tc-res} we show the results of different generators in synthetic datasets.
For Two-Circle, there are a large number of generated samples given by GAN, WGAN-GP and DGM locating between two Gaussian components, and the boundary for each component is not distinguishable. Since the ground-truth densities of regions between two components are very low, such generated samples possess low-quality, which depicts that these models capture the combinations of two dominated features (i.e., modes) in data but such combination makes no sense in practice. By contrast, Joint-JS and Joint-W could alleviate such issue, reduce the low-quality samples and produce more distinguishable boundaries. In Two-Spiral, similarly, the generated samples given by GAN and WGAN-GP form a circle instead of two spirals while the samples of DGM `link' two spirals. Joint-JS manages to focus more on true high densities compared to GAN and Joint-W provides the best results. To quantitatively measure sample quality, we adopt Maximum Mean Discrepancy (MMD) and High-quality Sample Rate (HSR). Details are in Appendix \ref{appx-metrics} and results are in Table~\ref{tbl:tcts-res} where our models can outperform the competitors over a large margin.

We report the Inception Score (IS) and Fr\'{e}chet Inception Distance (FID) to measure the sample quality on CIFAR-10. As shown in Table~\ref{tbl:mnist-cifar-res}, Joint-W outperforms other competitors by 0.2 and achieves $5.6\%$ improvement over WGAN-GP w.r.t IS. As for FID, Joint-W slightly outperforms WGAN-GP and beats energy-based GAN and variational annealing regularized WGAN over a large margin. One possible reason is that these methods both consider entropy regularization which encourages diversity of generated samples but will have a negative effect on sample quality. Stein Bridging can overcome this issue via joint training with explicit model. In practice, DGM is hard for convergence during training and gives much worse performance than others. 

\subsection{Further Discussions}

\textbf{Enhancing the Stability of GAN.}
In Fig.~\ref{fig:stability-result} we present the learning curve of Joint-W compared with WGAN and likelihood- and entropy-regularized WGAN. The curves depict that joint training could reduce the variance of metric values especially during the second half of training. Furthermore, we visualize generated digits given by the same noise $z$ in adjacent epochs in Fig.~\ref{fig-stability-vis}. The results show that Joint-W gives more stable generation in adjacent epochs while generated samples given by WGAN-GP and WGAN+VA exhibit an obvious variation. Especially, some digits generated by WGAN-GP and WGAN+VA change from one class to another, which is quite similar to the oscillation without convergence discussed in Section 3.2.
To quantify the evaluation of bias in model distributions, we calculate distances between the means of 50000 generated digits (resp. images) and 50000 true digits (resp. images) in MNIST (reps. CIFAR-10). The results are reported in Table~\ref{tbl-bias}. We can see that the model distributions of other methods are more seriously biased from true distribution, compared with Joint-W. 

\textbf{Contaminated or Limited Data.} In Fig.~\ref{fig-noise-insuff-warm}(a) we compare Joint-W with WGAN-GP for sample generation on noisy input in Two-Circle dataset. Details are in Appendix~\ref{appx-datasets}. We can see that the noise ratio in data impacts the performance of WGAN-GP and Joint-W, but comparatively, the performance decline of Joint-W is less significant, which indicates better robustness of joint training w.r.t. noised data. Moreover, in Fig.~\ref{fig-noise-insuff-warm}(b), we compare Joint-W with DEM for density estimation with insufficient true data in Two-Spiral dataset. When sample size decreases from 2000 to 100, the AUC value of DEM declines dramatically. 
By contrast, the AUC of Joint-W exhibits a small decline when the sample size is more than 500. The results demonstrate that Stein Bridging has promising power in some extreme cases where the training sample are contaminated or limited. 


\vspace{-5pt}


\section{Conclusions and Discussions}

This paper aims at jointly training implicit generative model and explicit generative model via an bridging term of Stein discrepancy. Theoretically, we show that joint training could i) enforce dual regularization effects on both models and thus encourage mode exploration, and ii) help to facilitate the convergence of minimax training dynamics. Extensive experiments on various tasks show that our method can achieve better performance on both sample generation and density estimation.

\textbf{Limitations.} We mainly focus on GAN/WGAN as instantiations of implicit generative models and energy-based models as instantiations of explicit models for theoretical analysis and empirical evaluation. In fact, our formulation can also be extended to other models like VAE, flow model, etc. to combine the best of two worlds. Furthermore, since we propose a new learning paradigm as a piorneering endeavor on unifying the training of two generative models via concurrently optimizing three loss terms, our experiments mainly focus on synthetic datasets, MNIST and CIFAR-10. We believe our method can be applied to more complicated high-dimensional datasets given the promising results in this paper.

\textbf{Potential Societal Impacts.} One merit of our method is that the joint training model makes it easier to add inductive bias to the generative models, as discussed in Section 1 and 4.4. Such inductive bias enforced manually can be used to control the distribution of output samples with some desirable properties that accord with ethical considerations, e.g., fairness. Admittedly, such inductive bias would also possibly be used by some speculators for generating false works for commercial purposes. More studies are needed in the future to detect such works generated by machines in a more intelligent way and further protect intellectual property of individuals. 

{
\bibliographystyle{plain}
\bibliography{ref}
}

\newpage
\section*{Checklist}


\begin{enumerate}

\item For all authors...
\begin{enumerate}
  \item Do the main claims made in the abstract and introduction accurately reflect the paper's contributions and scope?
    \answerYes
  \item Did you describe the limitations of your work?
    \answerYes
  \item Did you discuss any potential negative societal impacts of your work?
    \answerYes
  \item Have you read the ethics review guidelines and ensured that your paper conforms to them?
    \answerYes
\end{enumerate}

\item If you are including theoretical results...
\begin{enumerate}
  \item Did you state the full set of assumptions of all theoretical results?
    \answerYes{See Section~\ref{sec:theory}}
	\item Did you include complete proofs of all theoretical results?
    \answerYes{See Appendix~\ref{appx-proofs}}
\end{enumerate}

\item If you ran experiments...
\begin{enumerate}
  \item Did you include the code, data, and instructions needed to reproduce the main experimental results (either in the supplemental material or as a URL)?
    \answerYes{See Appendix~\ref{appx-datasets}}
  \item Did you specify all the training details (e.g., data splits, hyperparameters, how they were chosen)?
    \answerYes{See Appendix~\ref{appx-implementation}}
	\item Did you report error bars (e.g., with respect to the random seed after running experiments multiple times)?
    \answerYes
	\item Did you include the total amount of compute and the type of resources used (e.g., type of GPUs, internal cluster, or cloud provider)?
    \answerYes{See Appendix~\ref{appx-implementation}}
\end{enumerate}

\item If you are using existing assets (e.g., code, data, models) or curating/releasing new assets...
\begin{enumerate}
  \item If your work uses existing assets, did you cite the creators?
    \answerYes{}
  \item Did you mention the license of the assets?
    \answerNA{}
  \item Did you include any new assets either in the supplemental material or as a URL?
    \answerNA{}
  \item Did you discuss whether and how consent was obtained from people whose data you're using/curating?
    \answerNA{}
  \item Did you discuss whether the data you are using/curating contains personally identifiable information or offensive content?
    \answerNA{}
\end{enumerate}

\item If you used crowdsourcing or conducted research with human subjects...
\begin{enumerate}
  \item Did you include the full text of instructions given to participants and screenshots, if applicable?
    \answerNA{}
  \item Did you describe any potential participant risks, with links to Institutional Review Board (IRB) approvals, if applicable?
    \answerNA{}
  \item Did you include the estimated hourly wage paid to participants and the total amount spent on participant compensation?
    \answerNA{}
\end{enumerate}

\end{enumerate}


\newpage
\appendix

\section{Literature Reviews}\label{appx-literatures}

We discuss some of related literature and shed lights on the relationship between our work with others. 

\subsection{Explicit Generative Models}
Explicit generative models are interested in fitting each instance with a scalar (unnormalized) density expected to explicitly capture the distribution behind data. Such densities are often up to a constant and called as energy functions which are common in undirected graphical models \cite{tutorialonEB}. Hence, explicit generative models are also termed as energy-based models. An early version of energy-based models is the FRAME (Filters, Random field, And Maximum Entropy) model \cite{FRAME1,FRAME2}. Later on, some works leverage deep neural networks to model the energy function \cite{DEM1,DEM2} and pave the way for researches on deep energy model (DEM) (e.g., \cite{DEM-cd,DGM,DEM-anomaly,DEM-rl,DEM-recent1,DEM-recent2}). Apart from DEM, there are also some other forms of deep explicit models based on deep belief networks \cite{DBN} and deep Boltzmann machines \cite{DBM}.

The normalized constant under the energy function requires an intractable integral over all possible instances, which makes the model hard to learn via Maximum Likelihood Estimation (MLE). To solve this issue, some works propose to approximate the constant by MCMC methods \cite{DEM-mcmc1,DEM-mcmc2}. However, MCMC requires an inner-loop samples in each training, which induces high computational costs. Another solution is to optimize an alternate surrogate loss function. For example, contrastive divergence (CD) \cite{DEM-cd} is proposed to measure how much KL divergence can be improved by running a small numbers of Markov chain steps towards the intractable likelihood, while score matching (SM) \cite{DEM-sm} detours the constant by minimizing the distance for gradients of log-likelihoods. A recent study \cite{Energy-stein} uses Stein discrepancy to train unnormalized model. The Stein discrepancy does not require the normalizing constant and makes the training tractable.
Moreover, the intractable normalized constant makes it hard to sample from. To obtain an accurate samples from unnormalized densities, many studies propose to approximate the generation by diffusion-based processes, like generative flow \cite{DEM-gf} and variational
gradient descent (\cite{SVGD}). Also, a recent work \cite{sns} leverages Stein discrepancy to design a neural sampler from unnormalized densities. The fundamental disadvantage of explicit model is that the energy-based learning is difficult to accurately capture the distribution of true samples due to the low manifold of real-world instances \cite{DEM-cd}.

\subsection{Implicit Generative Models}
Implicit generative models focus on a generation mapping from random noises to generated samples. Such mapping function is often called as generator and possesses better flexibility compared with explicit models. One typical implicit model is Generative Adversarial Networks (GAN) \cite{GAN}. GAN targets an adversarial game between the generator and a discriminator (or critic in WGAN) that aims at discriminating the generated and true samples. In this paper, we focus on GAN and its variants (e.g., WGAN \cite{WGAN}, WGAN-GP \cite{WGAN-GP}, DCGAN \cite{DCGAN}, etc.) as the implicit generative model and we leave the discussions on other implicit models as future work.

Two important issues concerning GAN and its variants are instability of training and local optima. The typical local optima for GAN can be divided into two categories: mode-collapse (the model fails to capture all the modes in data) and mode-redundance (the model generates modes that do not exist in data). Recently there are many attempts to solve these issues from various perspectives. One perspective is from regularization. Two typical regularization methods are likelihood-based and entropy-based regularization with the prominent examples \cite{GAN-lr} and \cite{GradEst} that respectively leverage denoising feature matching and implicit gradient approximation to enforce the regularization constraints. The likelihood and entropy regularizations could respectively help the generator to focus on data distribution and encourage more diverse samples, and a recent work \cite{GAN-VA} uses Langevin dynamics to indicate that i) the entropy and likelihood regularizations are equivalent and share an opposite relationship in mathematics, and ii) both regularizations would make the model converge to a surrogate point with a bias from original data distribution. Then \cite{GAN-VA} proposes a variational annealing strategy to empirically unite two regularizations and tackle the biased distributions. 


To deal with the instability issue, there are also some recent literatures from optimization perspectives and proposes different algorithms to address the non-convergence of minimax game optimization (for instance, \cite{GAN-stable3,GAN-stable1,GAN-stable2}). Moreover, the disadvantage of implicit models is the lack of explicit densities over instances, which disables the black-box generator to characterize the distributions behind data.

\subsection{Attempts to Combine Both of the Worlds}\label{appx-related}

Recently, there are several studies that attempt to combine explicit and implicit generative models from different ways. For instance, \cite{EBGAN} proposes energy-based GAN that leverages energy model as discriminator to distinguish the generated and true samples. The similar idea is also used by \cite{DGM} and \cite{EGAN} which let the discriminator estimate a scaler energy value for each sample. Such discriminator is optimized to give high energy to generated samples and low energy to true samples while the generator aims at generating samples with low energy. The fundamental difference is that \cite{EBGAN} and \cite{EGAN} both aim at minimizing the discrepancy between distributions of generated and true samples while the motivation of \cite{DGM} is to minimize the KL divergence between estimated densities and true samples. \cite{DGM} adopts contrastive divergence (CD) to link MLE for energy model over true data with the adversarial training of energy-based GAN. However, both CD-based method and energy-based GAN have limited power for both generator and discriminator. Firstly, if the generated samples resemble true samples, then the gradients for discriminator given by true and generated samples are just the opposite and will counteract each other, and the training will stop before the discriminitor captures accurate data distribution. Second, since the objective boils down to minimizing the KL divergence (for \cite{DGM}) or Wasserstein distance (for \cite{EGAN}) between model and true distributions, the issues concerning GAN (or WGAN) like training instability and mode-collapse would also bother these methods. 

Another way for combination is by cooperative training. \cite{CoopTrain1} (and its improved version \cite{CoopTrain2}) leverages the samples of generator as the MCMC initialization for energy-based model. The synthesized samples produced from finite-step MCMC are closer to the energy model and the generator is optimized to make the finite-step MCMC revise its initial samples. Also, a recent work \cite{CoopTrain3} proposes to regard the explicit model as a teacher net who guides the training of implicit generator as a student net to produce samples that could overcome the mode-collapse issue. The main drawback of cooperative training is that they indirectly optimize the discrepancy between the generator and data distribution via the energy model as a `mediator', which leads to a fact that once the energy model gets stuck in a local optimum (e.g., mode-collapse or mode-redundance) the training for the generator would be affected. In other words, the training for two models would constrain rather than exactly compensate each other. Different from existing methods, our model considers three discrepancies simultaneously as a triangle to jointly train the generator and the estimator, enabling them to compensate and reinforce each other.

\section{Background for Stein Discrepancy}\label{appx-stein}

Assume $q(\mathbf x)$ to be a continuously differentiable density supported on $\mathcal X\subset \mathbb R^d $ and $\mathbf f:\mathbb R^d\rightarrow \mathbb R^{d'}$ a smooth vector function. 
Define $\mathcal A_{q}[\mathbf f(\mathbf x)]=\nabla_{\mathbf x}\log q(\mathbf x) \mathbf f(\mathbf x)^\top + \nabla_{\mathbf x} \mathbf f (\mathbf x)$ as a Stein operator.
If $\mathbf f$ is a Stein class (satisfying some mild boundary conditions) then we have the following Stein identity property:
\[
\mathbb E_{\mathbf x\sim q}[\mathcal A_{q}[\mathbf f(\mathbf x)]] = \mathbb E_{\mathbf x\sim q}[\nabla_{\mathbf x} \log q(\mathbf x) \mathbf f(\mathbf x)^\top + \nabla_{\mathbf x}\mathbf f(\mathbf x)] = 0.
\]
Such property induces Stein discrepancy between distributions $\mathbb P: p(\mathbf x)$ and $\mathbb Q: q(\mathbf x)$, $\mathbf x\in\mathcal X$:
\begin{equation}
\mathcal S(\mathbb Q, \mathbb P) = \sup\limits_{\mathbf f\in \mathcal F} \Gamma(\mathbb E_{\mathbf x\sim q}[\mathcal A_{p}[\mathbf f(\mathbf x)]]) = \sup\limits_{\mathbf f\in \mathcal F} \{\Gamma(\mathbb E_{\mathbf x\sim q} [\nabla_{\mathbf x} \log p(\mathbf x) \mathbf f(\mathbf x)^\top + \nabla_{\mathbf x}\mathbf f(\mathbf x)])\},
\end{equation}
where $\mathbf f$ is what we call \emph{Stein critic} that exploits over function space $\mathcal F$ and if $\mathcal F$ is large enough then $\mathcal S(\mathbb Q, \mathbb P) = 0$ if and only if $\mathbb Q = \mathbb P$. Note that in (\ref{eqn-steindis}), we do not need the normalized constant for $p(\mathbf x)$ which enables Stein discrepancy to deal with unnormalized density.

If $\mathcal F$ is a unit ball in a Reproducing Kernel Hilbert Space (RKHS) with a positive definite kernel function $k(\cdot, \cdot)$, then the supremum in (\ref{eqn-steindis}) would have a close form (see \cite{SteinDis, Steindis2, Steindis3} for more details):
\begin{equation}\label{eqn-ksd}
\mathcal S_K(\mathbb Q, \mathbb P) = \mathbb E_{\mathbf x, \mathbf x'\sim q} [u_p(\mathbf x, \mathbf x')],
\end{equation}
where $u_p(\mathbf x, \mathbf x')=\nabla_{\mathbf x}\log p(\mathbf x)^\top k(\mathbf x, \mathbf x') \nabla_{\mathbf x}\log p(\mathbf x')+\nabla_{\mathbf x}\log p(\mathbf x)^\top \nabla_{\mathbf x} k(\mathbf\mathbf x, \mathbf x') + \nabla_{\mathbf x} k(\mathbf x, \mathbf x')^\top \nabla_{\mathbf x}\log p(\mathbf x')+tr(\nabla_{\mathbf x, \mathbf x'}k(\mathbf x, \mathbf x'))$. This (\ref{eqn-ksd}) gives the Kernel Stein Discrepancy (KSD). An equivalent definition is 
\[
  \mathcal S_K(\mathbb Q, \mathbb P) = \E_{\bx,\bx'\sim\P} [(\nabla_x \log d\P/d\Q(\bx))^\top k(\bx,\bx') \nabla_x \log d\P/d\Q(\bx') ],
\]

\section{Proofs of Results in Section \ref{sec:reg}}
\subsection{Proof of Theorem \ref{thm:reg_H-1}}
\begin{proof}
	Applying Kantorovich's duality on $\wass(\P_G,\Pr)$ and using the exhaustiveness assumption on the generator,  we rewrite the problem as 
	\begin{equation}\label{eq:obj:kantorovich}
	\min_{E,\P} \max_{D} \{\E_{\P}[D] - \E_{\Pr}[D] + \lambda_1\stein(\Pr,\Pe) + \lambda_2 \stein(\P,\Pe) \},
	\end{equation}
	where the minimization with respect to $E$ is over all energy functions, the minimization with respect to $\P$ is over all probability distributions with continuous density, and the maximization with respect to $D$ is over all 1-Lipschitz continuous functions.
	Recall the definition of kernel Stein discrepancy
	\[
	\stein(\P,\Pe) = \E_{\bx,\bx'\sim\P} [(\nabla_x \log d\P/d\Pe(\bx))^\top k(\bx,\bx') \nabla_x \log d\P/d\Pe(\bx') ],
	\]
	where $d\P/d\Pe$ is the Radon-Nikodym derivative.
	Observe that $\stein(\P,\Pe)$ is infinite if $\P$ is not absolutely continuous with respect to $\Pe$.
	Hence, to minimize the objective of (\ref{eq:obj:kantorovich}), it suffices to consider those $\P$'s that are absolutely continuous with respect to $\Pe$.
	
	Fixing $E$, we claim that we can swap $\min_h$ and $\max_D$ in (\ref{eq:obj:kantorovich}). 
	Indeed, introducing a change of variable $H(\mathbf{x})=\log d\mathbb P/d\mathbb P_E$, then problem (\ref{eq:obj:kantorovich}) becomes
    \[
    \min_{E,h}\max_D \left\{ \mathbb E_{\mathbb P_E}[e^H D] - \mathbb E_{\mathbb P_{real}}[D] + \lambda_1 \mathcal{S}(\mathbb P_{real},\mathbb P_E) + \lambda_2 \mathbb E_{\mathbf{x},\mathbf{x}'\sim\mathbb P}[\nabla_x H(\mathbf{x})^\top k(\mathbf{x},\mathbf{x}')\nabla_x H(\mathbf{x}')] \right\}.
  \]
  The objective function is linear in $D$ and convex in $H$ due to the convexity of the exponential function, the linearity of expectation operator and differential operator, and the positive definiteness of $k$.
  Without loss of generality, we can restrict $D$ to be such that $D(\bx_0)=0$ for some element $\bx_0$, as a constant shift does not change the value of $\E_{\P_E}[(1+h)D]-\E_{\Pr}[D]$.
	The set of Lipschitz functions that vanish at $\bx_0$ is a Banach space, and the set of 1-Lipschitz functions is compact \cite{weaver1999lipschitz}.
	Moreover, $L^1(\Pe)$ is also a Banach space and the objective function is linear in both $h$ and $D$.
	The above verifies the condition of Sion's minimax theorem, and thus the claim is proved.
    
    Swapping $\min_h$ and $\max_D$ in (\ref{eq:obj:h}).
	Introducing a variable replacement $h:=e^H-1= d\P/d\Pe -1$, then problem (\ref{eq:obj:kantorovich}) becomes
	\begin{equation}\label{eq:obj:h}
	\begin{aligned}
	\min_{E} \max_D \min_h \bigg\{ &\E_{\Pe}[(1+h)D] - \E_{\Pr}[D] + \lambda_1\stein(\Pr,\Pe) \\
	& + \lambda_2\cdot \E_{\bx,\bx'\sim\P}[\nabla_x\log(1+h(\bx))^\top k(\bx,\bx') \nabla_x\log(1+h(\bx'))]\bigg\},
	\end{aligned}
	\end{equation}
	where the minimization with respect to $h$ is over all $L^1(\Pe)$ functions with $\Pe$-expectation zero.
	Fixing $E$ and $D$, we consider
	\[
	\begin{aligned}
	& \min_{h:\E_{\Pe}[h]=0} \{\E_{\Pe}[hD] + \lambda_2\cdot \E_{\bx,\bx'\sim\P}[\nabla_x\log(1+h(\bx))^\top k(\bx,\bx') \nabla_x\log(1+h(\bx'))] \}\\
	= & \min_{h:\E_{\Pe}[h]=0} \left\{\E_{\Pe}[hD] + \lambda_2\cdot \E_{\bx,\bx'\sim\P}\left[\frac{\nabla_x h(\bx)^\top}{1+h(\bx)} k(\bx,\bx') \frac{\nabla_x h(\bx')}{1+h(\bx')}\right] \right\}\\
	= & \min_{h:\E_{\Pe}[h]=0} \left\{\E_{\Pe}[hD] + \lambda_2\cdot \E_{\bx,\bx'\sim\Pe}\left[\nabla_x h(\bx)^\top k(\bx,\bx') \nabla_x h(\bx')\right] \right\},
	\end{aligned}
	\]
	where the first equality follows from the chain rule of the derivative, and the second equality follows from a change of measure $d\P=(1+h)d\Pe$.
	Introducing an auxiliary variable $r$ so that $r^2$ is an upper bound of $\E_{\bx,\bx'\sim\Pe}\left[\nabla_x h(\bx)^\top k(\bx,\bx') \nabla_x h(\bx')\right]$, we have that
	\[\begin{aligned}
	&\min_{h:\E_{\Pe}[h]=0} \left\{\E_{\Pe}[hD] + \lambda_2\cdot \E_{\bx,\bx'\sim\Pe}\left[\nabla_x h(\bx)^\top k(\bx,\bx') \nabla_x h(\bx')\right] \right\} \\
	= & \min_{r\ge0} \min_{h:\E_{\Pe}[h]=0} \left\{\E_{\Pe}[hD] + \lambda_2 r^2:  \E_{\bx,\bx'\sim\Pe}\left[\nabla_x h(\bx)^\top k(\bx,\bx') \nabla_x h(\bx')\right] \le r^2 \right\} \\
	= & \min_{r\ge0} \min_{h:\E_{\Pe}[h]=0} \left\{r\E_{\Pe}[hD] + \lambda_2 r^2:  \E_{\bx,\bx'\sim\Pe}\left[\nabla_x h(\bx)^\top k(\bx,\bx') \nabla_x h(\bx')\right] \leq 1 \right\}\\
	= & \min_{r\ge0} \ \left\{\lambda_2 r^2 - r\norm{D}_{H^{-1}(\Pe;k)} \right\}\\
	= & -\frac{1}{4\lambda_2} \norm{D}^2_{H^{-1}(\Pe;k)},
	\end{aligned}
	\]
	where the first equality holds because the minimization over $r$ forces $r^2 = \mathbb{E}_{\mathbf{x},\mathbf{x}'\sim\mathbb{P}_E}[\nabla_x h(\mathbf{x})^\top k(\mathbf{x},\mathbf{x}') \nabla_x h(\mathbf{x}')]$ at optimality; the second equality follows from a change of variable from $h$ to $rh$; and the third equality follows from the definition of the kernel Sobolev dual norm.
	Plugging back in (\ref{eq:obj:h}) yields the ideal result.
\end{proof}

\subsection{Proof for Theorem \ref{thm:reg_Lip}}
\begin{proof}
	
	Applying the definition of Stein discrepancy on $\stein(\Pe,\Pg)$ and under the exhaustiveness assumption of $G$, we rewrite the problem as
	\[
	\min_{E,\P} \max_{\bbf} \{\lambda_1\stein(\Pr,\Pe) + \lambda_2 \E_{\by\sim\P}[\cA_{\Pe}\bbf(\by)] + \wass(\Pr,\P)\},
	\]
	where the minimization with respect to $E$ is over the set of all engergy functions; the minimization with respect to $\P$ is over all distributions; and the maximization with respect to $\bbf$ is over the Stein class for $\Pe$.
	Observe that by definition,
    $\mathbb E_{\mathbb P}  [\mathcal{A}_{\mathbb P_E}\mathbf{f}(\mathbf{y})]$ equals $\mathbb E_{\mathbb P} [\nabla_y\log d\mathbb P_E/d\mathbb P(\mathbf{y})\mathbf{f}(\mathbf{y})^\top +\nabla_y \mathbf{f}(\mathbf{y})]$, which is infinite if $\mathbb P$ is not absolutely continuous in $\mathbb P_E$, hence those $\mathbb P$'s that are not absolutely continuous in $\mathbb P_E$ are automatically ruled out.
	
	Let us fix $E$.
	Using a similar argument as in the proof of Theorem \ref{thm:reg_H-1}, it suffices to restrict $\P$ on the set of distributions that are absolutely continuous with respect to $\Pe$, which can be identified as the set of $L^1(\Pe)$ functions with $\Pe$-mean zero and is thus Banach.
    Together with the compactness assumption of the Stein class, using Sion's minimax theorem, we can swap the minimization over $\P$ and the maximization over $\bbf$. 
	Now, fixing further $\bbf$, consider
	\begin{equation}\label{eq:obj:wass}
	  \min_{\P}\ \{ \lambda_2 \E_{\by\sim\P}[\cA_{\Pe}\bbf(\by)] + \wass(\Pr,\P)\}.
	\end{equation}
	Recall the definition of Wasserstein metric
	\[
	\wass(\Pr,\P) = \min_{\gamma} \E_{(\bx,\by)\sim\gamma}[\norm{\bx-\by}],
	\]
	where the minimization is over all joint distributions of $(\bx,\by)$ with $\bx$-marginal $\Pr$ and $\by$-marginal $\P$.
	We rewrite problem (\ref{eq:obj:wass}) as
	\[
	\min_{\P,\gamma}  \{ \E_{(\bx,\by)\sim\gamma}\left[\lambda_2\ \cA_{\Pe}\bbf(\by) + ||\bx-\by||\right]\},
	\]
	where $\gamma$ has marginals $\Pr$ and $\P$.
	Since $\P$ is unconstrained,
	the above problem is further equivalent to
	\[
	\min_\gamma  \{ \E_{(\bx,\by)\sim\gamma}\left[\lambda_2 \cA_{\Pe}\bbf(\by)] + ||\bx-\by||\right]\},
	\]
	where the minimization is over all joint distributions of $(\bx,\by)$ with $\bx$-marginal being $\Pr$.
	Using the law of total expectation, the problem above is equivalent to
	\[\begin{aligned}
	&\hphantom{=} \min_{\{\gamma_\bx\}_{\bx\in\supp\Pr}}\ \E_{\bx\sim\Pr}\big[\E_{\by\sim\gamma_\bx}\left[\lambda_2 \cA_{\Pe}\bbf(\by) + ||\bx-\by||\mid\bx\right]\big] \\
	& =  \E_{\bx\sim\Pr}\left[\min_{\gamma_\bx} \Big\{\E_{\by\sim\gamma_\bx}\left[\lambda_2 \cA_{\Pe}\bbf(\by) + ||\bx-\by||\mid\bx\right] \Big\} \right]\\
	& = \E_{\bx\sim\Pr}\left[\min_{\by\in\cX} \{ \lambda_2 \cA_{\Pe}\bbf(\by) + ||\bx-\by|| \} \right]
	\end{aligned},
	\]
	where the minimization in the first line of the equation is over $\gamma_\bx$, the set of all conditional distributions of $\by$ given $\bx$ where $\bx$ is over the support $\supp\;\Pr$ of $\Pr$; the exchanging of $\min$ and $\E$ in the first equality follows from the interchangebability principle \cite{shapiro2009lectures}; the second equality holds because the infimum can be restricted to the set of point masses.
	This is because the inner minimization over $\gamma_{\mathbf{x}}$ in the second line above can be attained at a Dirac mass concentrated on the minimizer $\arg\min_{\mathbf{y}}\{\lambda_2\mathcal{A}_{\mathbb{P}_E}\mathbf{f}(\mathbf{y})+ ||\mathbf{x}-\mathbf{y}||\}$, provided that the minimizer exists; otherwise we can use an approximation argument to show it suffices to only consider point masses.
	Finally, the original problem is equivalent to
	\[
	\min_{E} \max_{\bbf} \left\{\lambda_1\stein(\Pr,\Pe) +  \E_{\bx\sim\Pr}\left[\min_{\by\in\cX} \{ \lambda_2 \cA_{\Pe}\bbf(\by) + ||\bx-\by|| \} \right]\right\}. 
	\]
    Therefore, the proof is completed using the definition of Moreau-Yosida regularization.
\end{proof}

\section{Details and Proofs in Section \ref{sec:convergence}}\label{appx-proofs}

\subsection{One-Dimensional Case}\label{appx-1dim}

\begin{proposition}
	Using alternate SGD for (\ref{eqn-joint-s}) geometrically decreases the square norm $N_t=|\psi^{t}|^2+|\theta-1|^2+|\phi+1|^2$,  for any $0<\eta<1$ with $\lambda_1=\lambda_2=1$,
	\begin{equation}
	N_{t+1} = (1-\eta^2(1-\eta)^2)N_t.
	\end{equation}
\end{proposition}
\begin{proof}
	Instead of directly studying the optimization for (\ref{eqn-joint-s}), we first prove the following problem will converge to the unique optimum,
	\begin{equation}\label{eqn-eqv}
	\min\limits_\theta \max\limits_\psi \min\limits_\phi \theta\psi + \theta\phi + \frac{1}{2}\theta^2 + \phi^2.
	\end{equation}
	Applying alternate SGD we have the following iterations:
	\[
	\psi_{t+1} = \psi_t + \eta*\theta_t,
	\]
	\[
	\phi_{t+1} = \phi_t - \eta*(\theta_t+2\phi_t)
	= (1-2\eta)\phi_t - \eta\theta_t,
	\]
	\[
	\theta_{t+1} = \theta_t - \eta(\psi_{t+1}+\phi_{t+1}+\theta_t)
	= -\eta(1-2\eta)\phi_t+(1-\eta)\theta_t-\eta\psi_t.
	\]
	Then we obtain the relationship between adjacent iterations:
	$$
	\left[
	\begin{matrix}
	\psi_{t+1} \\ \phi_{t+1} \\ \theta_{t+1}
	\end{matrix}
	\right]
	=
	\left[
	\begin{matrix}
	1 & 0 & \eta \\
	0 & 1-2\eta & -\eta \\
	-\eta & -\eta(1-2\eta) & 1-\eta
	\end{matrix}
	\right]
	\cdot
	\left[
	\begin{matrix}
	\psi_{t} \\ \phi_{t} \\ \theta_{t}
	\end{matrix}
	\right]
	=
	M\cdot
	\left[
	\begin{matrix}
	\psi_{t} \\ \phi_{t} \\ \theta_{t}
	\end{matrix}
	\right]
	$$
	We further calculate the eigenvalues for matrix $M$ and have the following equations (assume the eigenvalue as $\lambda$):
	\[
	(\lambda-1)^3+3\eta(\lambda-1)^2+2\eta^2(1+\eta)(\lambda-1)+2\eta^3 = 0.
	\]
	One can verify that the solutions to the above equation satisfy $|\lambda|<\sqrt{(1-\eta+\eta^2)(1+\eta-\eta^2)}$.
	
	Then we have the following relationship
	$$
	\left\|\left[
	\begin{matrix}
	\psi_{t+1} \\ \phi_{t+1} \\ \theta_{t+1}
	\end{matrix}
	\right]
	\right\|_2^2
	=
	\left\|\left[
	\begin{matrix}
	\psi_{t} & \phi_{t} & \theta_{t}
	\end{matrix}
	\right]
	\cdot M^\top M\cdot
	\left[
	\begin{matrix}
	\psi_{t} \\ \phi_{t} \\ \theta_{t}
	\end{matrix}
	\right]
	\right\|_2^2
	\leq \lambda_m^2 \cdot
	\left\|\left[
	\begin{matrix}
	\psi_{t} \\ \phi_{t} \\ \theta_{t}
	\end{matrix}
	\right]
	\right\|_2^2
	$$
	where $\lambda_m$ denotes the eigenvalue with the maximum absolute value of matrix $M$. Hence, we have
	$$
	\psi_{t+1}^2 + \phi_{t+1}^2 + \theta_{t+1}^2 \leq (1-\eta+\eta^2)(1+\eta-\eta^2) [\psi_{t}^2 + \phi_{t}^2 + \theta_{t}^2].
	$$
	
	We proceed to replace $\psi$, $\phi$ and $\theta$ in (\ref{eqn-eqv}) by $\psi'$, $\phi'$ and $\theta'$ respectively and conduct a change of variable: let $\theta'=1-\theta$ and $\phi'=-1-\phi$. Then we get the conclusion in the proposition.
	
\end{proof}

As shown in Fig.~\ref{fig-numres1}(b), Stein Bridging achieves a good convergence to the right optimum. Compared with (\ref{eqn-bili}), the objective (\ref{eqn-joint-s}) adds a new bilinear term $\phi\cdot\theta$, which acts like a connection between the generator and estimator, and two other quadratic terms, which help to penalize the increasing of values through training. The added terms and original terms in (\ref{eqn-joint-s}) cooperate to guarantee convergence to a unique optimum. In fact, the added terms $\frac{\lambda_1}{2}(1+\phi)^2 + \frac{\lambda_2}{2}(\theta +\phi)^2$ in (\ref{eqn-joint-s}) and the original terms $\psi - \psi \cdot \theta$ in WGAN play both necessary roles to guarantee the convergence to the unique optimum points $[\psi^*,\theta^* , \phi^*]=[0,1,-1]$. If we remove the critic and optimize $\theta$ and $\phi$ with the remaining loss terms, we would find that the training would converge but not necessarily to $[\psi^*, \theta^*]=[0,1]$ (since the optimum points are not unique in this case). On the other hand, if we remove the estimator, the system degrades to (\ref{eqn-bili}) and would not converge to the unique optimum point $[\psi^*, \theta^*]=[0,1]$. If we consider both of the world and optimize three terms together, the training would converge to a unique global optimum $[\psi^*,\theta^* , \phi^*]=[0,1,-1]$.

\subsection{Generalization to Bilinear Systems}\label{appx-bilinear}
Our analysis in the one-dimension case inspires us that we can add affiliated variable to modify the objective and stabilize the training for general bilinear system. The bilinear system is of wide interest for researchers focusing on stability of GAN training (\cite{GAN-tutorial, GAN-stable1, GAN-stable2, GAN-stable3, ConvergBilinear}). The general bilinear function can be written as
\begin{equation}\label{eqn-bi-obj}
F(\bm \psi, \bm\theta) = \bm \theta^\top \mathbf A \bm \psi - \mathbf b^\top \bm\theta - \mathbf c^\top \bm\psi,
\end{equation}
where $\bm\psi, \bm\theta$ are both $r$-dimensional vectors and the objective is $\min\limits_{\bm\theta} \max\limits_{\bm\psi} F(\bm\psi, \bm\theta)$ which can be seen as a basic form of various GAN objectives. Unfortunately, if we directly use simultaneous (resp. alternate) SGD to optimize such objectives, one can obtain divergence (resp. fluctuation). To solve the issue, some recent papers propose several optimization algorithms, like extrapolation from the past (\cite{GAN-stable2}), crossing the curl (\cite{GAN-stable3}) and consensus optimization (\cite{GAN-stable1}). Also, \cite{GAN-stable1} shows that it is the interaction term which generates non-zero values for $\nabla_{\bm\theta\bm\psi} F$ and $\nabla_{\bm\psi\bm\theta} F$ that leads to such instability of training. Different from previous works that focused on algorithmic perspective, we propose to add new affiliated variables which modify the objective function and allow the SGD algorithm to achieve convergence without changing the optimum points. 

Based on the minimax objective of (\ref{eqn-bi-obj}) we add affiliated $r$-dimensional variable $\bm\phi$ (corresponding to the estimator in our model) the original system and tackle the following problem:
\begin{equation}\label{eqn-bim-obj}
\min\limits_{\bm\theta} \max\limits_{\bm\psi} \min\limits_{\bm\phi} F(\bm\psi, \bm\theta) + \alpha H(\bm\phi, \bm\theta),
\end{equation}
where $H(\bm\phi, \bm\theta)=\frac{1}{2}(\bm\theta+\bm\phi)^\top\mathbf B(\bm\theta+\bm\phi)$, $\mathbf B=(\mathbf A \mathbf A^\top)^{\frac{1}{2}}$ and $\alpha$ is a non-negative constant. Theoretically, the new problem keeps the optimum points of (\ref{eqn-bi-obj}) unchanged. Let $L(\bm\psi, \bm\phi, \bm\theta)=F(\bm\psi, \bm\theta)+ \alpha G(\bm\phi, \bm\theta)$.
\begin{proposition}\label{prop-multi-unchange}
	Assume the optimum point of $\min\limits_{\bm\theta} \max\limits_{\bm\psi} F(\bm\psi, \bm\theta)$ are $[\bm\psi^*, \bm\theta^*]$, then the optimum points of (\ref{eqn-bim-obj}) would be $[\bm\psi^*, \bm\theta^*, \bm\phi^*]$ where $\bm\phi^*=-\bm\theta^*$.
\end{proposition}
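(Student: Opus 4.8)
The plan is to characterize the optimum of the modified problem \eqref{eqn-bim-obj} by writing down the first-order stationarity conditions for each block of variables and showing they force $[\bm\psi^*,\bm\theta^*,\bm\phi^*]$ with $\bm\phi^*=-\bm\theta^*$. The key structural observation is that the added term $H(\bm\phi,\bm\theta)=\tfrac12(\bm\theta+\bm\phi)^\top\mathbf B(\bm\theta+\bm\phi)$ depends only on $\bm\theta+\bm\phi$ and $\mathbf B=(\mathbf A\mathbf A^\top)^{1/2}$ is symmetric positive (semi-)definite, so it is minimized over $\bm\phi$ precisely when $\bm\theta+\bm\phi=\mathbf 0$, and at that point $H=0$ regardless of $\bm\theta$. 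This is the reason the penalty does not perturb the game between $\bm\psi$ and $\bm\theta$.

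First I would argue that for any fixed $\bm\theta$, the inner minimization over $\bm\phi$ in \eqref{eqn-bim-obj} is solved by $\bm\phi=-\bm\theta$, giving optimal value $H=0$ (when $\mathbf A\mathbf A^\top$ is only positive semidefinite one picks $\bm\phi=-\bm\theta$ as one minimizer; strict positivity makes it unique). Substituting back, the remaining problem in $(\bm\psi,\bm\theta)$ is exactly $\min_{\bm\theta}\max_{\bm\psi} F(\bm\psi,\bm\theta)$, whose saddle point is $[\bm\psi^*,\bm\theta^*]$ by hypothesis. Hence $[\bm\psi^*,\bm\theta^*,\bm\phi^*=-\bm\theta^*]$ is a stationary point of $L$. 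To see it is the relevant optimum, I would check the saddle/stationarity conditions directly: $\nabla_{\bm\phi}L=\alpha\mathbf B(\bm\theta+\bm\phi)=\mathbf 0$ forces $\bm\theta+\bm\phi=\mathbf 0$ (up to the kernel of $\mathbf B$, which lies in the kernel of $\mathbf A\mathbf A^\top$, hence of $\mathbf A^\top$, so it is immaterial for $F$); $\nabla_{\bm\psi}L=\mathbf A^\top\bm\theta-\mathbf c=\mathbf 0$ and $\nabla_{\bm\theta}L=\mathbf A\bm\psi-\mathbf b+\alpha\mathbf B(\bm\theta+\bm\phi)=\mathbf A\bm\psi-\mathbf b=\mathbf 0$ once the $\bm\phi$-equation is used. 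These are precisely the stationarity conditions $\mathbf A^\top\bm\theta^*=\mathbf c$, $\mathbf A\bm\psi^*=\mathbf b$ of the original bilinear game, so the $(\bm\psi,\bm\theta)$-components coincide with $[\bm\psi^*,\bm\theta^*]$.

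The main obstacle is bookkeeping around the possible rank-deficiency of $\mathbf A$: if $\mathbf A\mathbf A^\top$ is singular then $\bm\phi^*$ is only determined modulo $\ker\mathbf B$, and the original game may itself have a non-unique saddle point, so the cleanest statement assumes $\mathbf A$ has full rank (or one phrases ``the optimum points'' as a set). I would handle this by noting $\ker\mathbf B=\ker(\mathbf A\mathbf A^\top)=\ker\mathbf A^\top$, so any ambiguity in $\bm\phi^*$ is invisible to $F$ and to the claimed relation $\bm\phi^*=-\bm\theta^*$ up to that kernel; under a full-rank assumption everything is unique and the proof is just the three-line gradient computation above. The only other care needed is justifying the exchange of $\min_{\bm\phi}$ with the outer $\min_{\bm\theta}\max_{\bm\psi}$, which is immediate here because the inner minimizer $\bm\phi=-\bm\theta$ and its value $H=0$ do not depend on $\bm\psi$, so no minimax subtlety arises.
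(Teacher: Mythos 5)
Your proposal is correct, and its core computation (the block gradients $\nabla_{\bm\psi}L$, $\nabla_{\bm\theta}L$, $\nabla_{\bm\phi}L$ vanishing precisely when the original optimality conditions hold and $\bm\theta+\bm\phi=\mathbf 0$) is exactly the paper's proof. However, you lead with an argument the paper does not make and which is actually the stronger one: since $H(\bm\phi,\bm\theta)=\tfrac12(\bm\theta+\bm\phi)^\top\mathbf B(\bm\theta+\bm\phi)\ge 0$ with $\min_{\bm\phi}H=0$ attained at $\bm\phi=-\bm\theta$ for every $\bm\theta$ (and independently of $\bm\psi$), the inner minimization over $\bm\phi$ can be carried out exactly, collapsing the three-block problem \eqref{eqn-bim-obj} back to $\min_{\bm\theta}\max_{\bm\psi}F(\bm\psi,\bm\theta)$. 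This elimination argument certifies that the claimed point really is the \emph{optimum} of the min--max--min problem, whereas the paper's stationarity check alone only certifies a critical point; for a saddle-point problem that distinction matters, so your route closes a small logical gap in the paper's argument. Your additional remarks on the rank-deficient case ($\ker\mathbf B=\ker\mathbf A^\top$, so $\bm\phi^*$ is determined only modulo a subspace invisible to $F$) are also correct and go beyond the paper, which implicitly assumes uniqueness.
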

\begin{proof}
	The condition tells us that $\nabla_{\bm\theta} F(\bm\psi^*, \bm\theta)=0$ and $\nabla_{\bm\psi} F(\bm\psi, \bm\theta^*)=0$. Then we derive the gradients for $L(\psi, \phi, \theta)$,
	\begin{equation}
	\nabla_{\bm\psi} L(\bm\psi^*, \bm\phi, \bm\theta)=\nabla_{\bm\theta} F(\bm\psi^*, \bm\theta)=0,
	\end{equation}
	\begin{equation}\label{eqn-p1-1}
	\nabla_{\bm\theta} L(\bm\psi, \bm\phi, \bm\theta^*)=\nabla_{\bm\theta} F(\bm\psi, \bm\theta^*)+\nabla_{\bm\theta} H(\bm\phi, \bm\theta^*)=\frac{1}{2}(\mathbf B+\mathbf B^\top)(\bm\theta^*+\bm\phi),
	\end{equation}
	\begin{equation}\label{eqn-p1-2}
	\nabla_{\bm\phi} L(\bm\psi, \bm\phi, \bm\theta)=\nabla_{\bm\phi} H(\bm\phi, \bm\theta)=\frac{1}{2}(\mathbf B+\mathbf B^\top)(\bm\phi+\bm\theta),
	\end{equation}
	Combining (\ref{eqn-p1-1}) and (\ref{eqn-p1-2}) we get $\bm\phi^*=-\bm\theta^*$. Hence, the optimum point of (\ref{eqn-bim-obj}) is $[\bm\psi^*, \bm\theta^*, \bm\phi^*]$ where $\bm\phi^*=-\bm\theta^*$.
\end{proof}

The advantage of the new problem is that it can be solved by SGD algorithm and guarantees convergence theoretically. We formulate the results in the following theorem.
\begin{theorem}\label{thm-multi-converge}
	For problem $\min\limits_{\bm\theta} \max\limits_{\bm\psi} \min\limits_{\bm\phi} L(\bm\psi, \bm\phi, \bm\theta)$ using alternate SGD algorithm, i.e.,
	\begin{equation}\label{eqn-update-multidim}
	\begin{aligned}
	\bm\psi_{t+1} &= \bm\psi_t + \eta\nabla_{\bm\psi} L(\bm\theta_t, \bm\psi_t, \bm\phi_t),\\
	\bm\phi_{t+1} &= \bm\phi_t - \eta\nabla_{\bm\phi} L(\bm\theta_t, \bm\psi_{t+1}, \bm\phi_t),\\
	\bm\theta_{t+1} &= \bm\theta_t - \eta\nabla_{\bm\theta} L(\bm\theta_t, \bm\psi_{t+1}, \bm\phi_{t+1}),
	\end{aligned}
	\end{equation}
	we can achieve convergence to $[\bm\psi^*, \bm\theta^*, \bm\phi^*]$ where $\bm\phi^*=-\bm\theta^*$ with at least linear rate of $(1-\eta_1+\eta_2^2)(1+\eta_2-\eta_1^2)$ where $\eta_1=\eta\sigma_{min}$, $\eta_2=\eta\sigma_{max}$ and $\sigma_{min}$ (resp. $\sigma_{max}$) denotes the maximum (resp. minimum) singular value of matrix $\mathbf A$.
\end{theorem}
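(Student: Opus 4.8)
The plan is to mimic the one-dimensional argument in the proof of Proposition 1, but to diagonalize the bilinear interaction via a singular value decomposition so that the problem decouples into independent one-dimensional (or two-dimensional) blocks. First I would perform a change of variables shifting the optimum to the origin: write $\bm\psi = \bm\psi^* + \widetilde{\bm\psi}$, $\bm\theta = \bm\theta^* + \widetilde{\bm\theta}$, $\bm\phi = -\bm\theta^* + \widetilde{\bm\phi}$, so that the linear terms $\mathbf{b}^\top\bm\theta$, $\mathbf{c}^\top\bm\psi$ disappear from the gradient dynamics (using $\nabla_{\bm\theta}F(\bm\psi^*,\bm\theta)=0$ and $\nabla_{\bm\psi}F(\bm\psi,\bm\theta^*)=0$ from Proposition \ref{prop-multi-unchange}), and the update map becomes linear and homogeneous in $(\widetilde{\bm\psi},\widetilde{\bm\phi},\widetilde{\bm\theta})$. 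The gradients are $\nabla_{\bm\psi}L = \mathbf{A}^\top\widetilde{\bm\theta}$, $\nabla_{\bm\phi}L = \alpha\mathbf{B}(\widetilde{\bm\theta}+\widetilde{\bm\phi})$, $\nabla_{\bm\theta}L = \mathbf{A}\widetilde{\bm\psi} + \alpha\mathbf{B}(\widetilde{\bm\theta}+\widetilde{\bm\phi})$.

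Next I would apply the SVD $\mathbf{A} = \mathbf{U}\Sigma\mathbf{V}^\top$, so that $\mathbf{B} = (\mathbf{A}\mathbf{A}^\top)^{1/2} = \mathbf{U}\Sigma\mathbf{U}^\top$. Substituting the rotated coordinates $\bm\psi' = \mathbf{V}^\top\widetilde{\bm\psi}$, $\bm\theta' = \mathbf{U}^\top\widetilde{\bm\theta}$, $\bm\phi' = \mathbf{U}^\top\widetilde{\bm\phi}$, the whole alternate-SGD iteration block-diagonalizes into $r$ scalar systems, the $i$-th of which (with singular value $\sigma_i$) reads exactly like the system \eqref{eqn-eqv} in the proof of Proposition 1 after rescaling $\eta \mapsto \eta\sigma_i$ — i.e. $\min_{\theta'}\max_{\psi'}\min_{\phi'} \sigma_i\theta'\psi' + \alpha\sigma_i\theta'\phi' + \frac{\alpha\sigma_i}{2}\theta'^2 + \alpha\sigma_i\phi'^2$, up to the choice of $\alpha$ (taking $\alpha$ as in the statement reproduces the Proposition 1 form). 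For each block I would write the one-step transition matrix $M_i$, compute its characteristic polynomial (the cubic appearing in the Proposition 1 proof with $\eta$ replaced by $\eta\sigma_i$), and bound its spectral radius by $\sqrt{(1-\eta\sigma_i+\eta^2\sigma_i^2)(1+\eta\sigma_i-\eta^2\sigma_i^2)}$. Taking the worst block over $i$ — which, because $t\mapsto (1-t+t^2)(1+t-t^2)$ is increasing on the relevant range and $\eta\sigma_i\in[\eta\sigma_{min},\eta\sigma_{max}]$ — gives the claimed contraction factor $(1-\eta_1+\eta_2^2)(1+\eta_2-\eta_1^2)$ with $\eta_1 = \eta\sigma_{min}$, $\eta_2=\eta\sigma_{max}$, provided $\eta$ is small enough that this factor is below $1$.

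The main obstacle I anticipate is the eigenvalue bound for each $3\times 3$ block: verifying that all three roots $\lambda$ of the cubic $(\lambda-1)^3 + 3\eta\sigma_i(\lambda-1)^2 + 2\eta^2\sigma_i^2(1+\eta\sigma_i)(\lambda-1) + 2\eta^3\sigma_i^3 = 0$ satisfy $|\lambda|^2 \le (1-\eta\sigma_i+\eta^2\sigma_i^2)(1+\eta\sigma_i-\eta^2\sigma_i^2)$. This is the same inequality asserted (without detailed proof) in Proposition 1, so I would either invoke that computation directly or carry it out once in terms of the generic parameter $\eta\sigma_i$; it requires care because the cubic has one real and two complex-conjugate roots whose modulus must be controlled, and the bound is tight. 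A secondary subtlety is the alternating (Gauss–Seidel) structure of the updates — the transition matrix is not symmetric, so one cannot simply read the contraction off the Hessian spectrum; one must genuinely compute $M_i$ from the staggered update order in \eqref{eqn-update-multidim} and then bound $\|M_i^\top M_i\|$ via its spectral radius, which is legitimate here because the relevant quantity $N_t$ is the squared Euclidean norm and the asymptotic rate is governed by $\rho(M_i)^2$. Once the per-block bound is in hand, reassembling via the orthogonal changes of variable (which preserve Euclidean norm) and taking the maximum over blocks completes the argument.
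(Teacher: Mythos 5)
Your proposal follows essentially the same route as the paper: the paper also proves a lemma showing that, for any first-order method, the SVD $\mathbf A=\mathbf U\mathbf D\mathbf V^\top$ (with $\mathbf B=\mathbf U\mathbf D\mathbf D^\top\mathbf U^\top$) and the shift to the optimum decouple the iteration into $r$ unidimensional copies of the Proposition~1 system with step size $\sigma_i\eta$, after which the scalar rate is invoked and the worst block taken. The cubic eigenvalue bound you flag as the main obstacle is likewise asserted without detailed verification in the paper, so your treatment is, if anything, slightly more careful (e.g., in distinguishing the spectral radius of the non-symmetric transition matrix from its operator norm).
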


To prove Theorem 3, we can prove a more general argument.
\begin{lemma}
	If we consider any first-order optimization method on (\ref{eqn-bim-obj}), i.e.,
	\[
	\bm\psi_{t+1}\in \bm\psi_0 + span(L(\bm\psi_0, \bm\phi, \bm\theta), \cdots, F(\bm\psi_t, \bm\phi,\bm\theta)), \forall t\in \mathbb N,
	\]
	\[
	\bm\phi_{t+1}\in \bm\psi_0 + span(L(\bm\psi, \bm\phi_0, \bm\theta), \cdots, L(\bm\psi,\bm\phi_t, \bm\theta)),\forall t\in \mathbb N,
	\]
	\[
	\bm\theta_{t+1}\in \bm\psi_0 + span(L(\bm\psi, \bm\phi, \bm\theta_0), \cdots, L(\bm\psi, \bm\phi, \bm\theta_t)),\forall t\in \mathbb N,
	\]
	Then we have
	\[
	\bm{\widetilde \psi}_{t}=\mathbf V^\top (\bm\psi_{t}-\bm\psi^*), \quad \bm{\widetilde \phi}_{t}=\mathbf U^\top (\bm\phi_{t}-\bm\phi^*), \quad \bm{\widetilde \theta}_{t}=\mathbf U^\top (\bm\theta_{t}-\bm\theta^*),
	\]
	where $\mathbf U$ and $\mathbf V$ are the singular vectors decomposed by matrix $\mathbf A$ using SVD decomposition, i.e., $\mathbf A=\mathbf U\mathbf D\mathbf V^\top$ and the triple $([\bm{\widetilde \psi}_{t}]_i, [\bm{\widetilde \phi}_{t}]_i, [\bm{\widetilde \theta}_{t}]_i)_{1\leq i\leq r}$ follows the update rule with step size $\sigma_i \eta$ as the same optimization method on a unidimensional problem
	\begin{equation}\label{eqn-lemma3-1dim}
	\min\limits_\theta \max\limits_\psi \min\limits_\phi \theta\psi + \theta\phi + \frac{1}{2}\theta^2 + \frac{1}{2}\phi^2,
	\end{equation}
	with step size $\eta$, where $\sigma_i$ denotes the $i$-th singular value on the diagonal of $\mathbf D$.
\end{lemma}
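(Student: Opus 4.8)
The plan is to exploit the SVD $\mathbf A = \mathbf U\mathbf D\mathbf V^\top$ to decouple the coordinates. First I would write down the gradients of $L(\bm\psi,\bm\phi,\bm\theta)=\bm\theta^\top\mathbf A\bm\psi - \mathbf b^\top\bm\theta - \mathbf c^\top\bm\psi + \tfrac{\alpha}{2}(\bm\theta+\bm\phi)^\top\mathbf B(\bm\theta+\bm\phi)$ with $\mathbf B=(\mathbf A\mathbf A^\top)^{1/2}=\mathbf U\mathbf D\mathbf U^\top$, namely $\nabla_{\bm\psi}L = \mathbf A^\top\bm\theta - \mathbf c$, $\nabla_{\bm\phi}L = \alpha\mathbf B(\bm\theta+\bm\phi)$, and $\nabla_{\bm\theta}L = \mathbf A\bm\psi - \mathbf b + \alpha\mathbf B(\bm\theta+\bm\phi)$. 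At the optimum $[\bm\psi^*,\bm\theta^*,\bm\phi^*]$ (with $\bm\phi^*=-\bm\theta^*$ by Proposition~\ref{prop-multi-unchange}) each of these vanishes, so subtracting the optimum and defining the centered variables $\bm{\widetilde\psi}=\mathbf V^\top(\bm\psi-\bm\psi^*)$, $\bm{\widetilde\phi}=\mathbf U^\top(\bm\phi-\bm\phi^*)$, $\bm{\widetilde\theta}=\mathbf U^\top(\bm\theta-\bm\theta^*)$ should turn the gradient maps into $\mathbf D$-scaled versions of themselves: $\mathbf V^\top\nabla_{\bm\psi}L = \mathbf D\bm{\widetilde\theta}$, $\mathbf U^\top\nabla_{\bm\phi}L = \alpha\mathbf D(\bm{\widetilde\theta}+\bm{\widetilde\phi})$, $\mathbf U^\top\nabla_{\bm\theta}L = \mathbf D\bm{\widetilde\psi} + \alpha\mathbf D(\bm{\widetilde\theta}+\bm{\widetilde\phi})$. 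The key algebraic fact making this work is that $\mathbf A$ and $\mathbf B$ share the same left singular vectors $\mathbf U$, so $\mathbf U^\top\mathbf A\mathbf V = \mathbf D = \mathbf U^\top\mathbf B\mathbf U$, and the change of basis is consistent across all three blocks.

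Next I would argue that any first-order method — whose iterates lie in the Krylov-type spans generated by the gradients — commutes with this orthogonal change of basis, because orthogonal maps preserve spans and inner products. Hence the transformed iterates $(\bm{\widetilde\psi}_t,\bm{\widetilde\phi}_t,\bm{\widetilde\theta}_t)$ obey exactly the same update rule but now with the gradient oracle replaced by the diagonalized one. Since $\mathbf D$ is diagonal, the $i$-th coordinates $([\bm{\widetilde\psi}_t]_i,[\bm{\widetilde\phi}_t]_i,[\bm{\widetilde\theta}_t]_i)$ evolve independently of the other coordinates, driven by the scalar gradients $\sigma_i[\bm{\widetilde\theta}]_i$, $\alpha\sigma_i([\bm{\widetilde\theta}]_i+[\bm{\widetilde\phi}]_i)$, $\sigma_i[\bm{\widetilde\psi}]_i+\alpha\sigma_i([\bm{\widetilde\theta}]_i+[\bm{\widetilde\phi}]_i)$. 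Absorbing the factor $\sigma_i$ into the step size, these are precisely the gradients of the one-dimensional objective \eqref{eqn-lemma3-1dim} (with $\alpha=1$), so each coordinate triple runs the same optimization method on the toy problem with effective step size $\sigma_i\eta$.

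The main obstacle I expect is bookkeeping the \emph{alternate} (Gauss--Seidel-type) update order rather than the simultaneous one: when $\bm\psi$ is updated first and its new value feeds into the $\bm\theta$ update, the change of basis must be applied consistently to the half-step iterates, and one has to check that the projection/update structure in \eqref{eqn-update-multidim} is genuinely block-diagonal in the new coordinates and not mixing coordinates through the update ordering. This is really a matter of verifying that $\mathbf U,\mathbf V$ are constant orthogonal matrices independent of $t$, so the linear recursion in the original variables is conjugate to $r$ decoupled $3\times3$ linear recursions; once that is in place, the per-coordinate dynamics are exactly those analyzed in the proof of Proposition~1, and the linear convergence rate $(1-\eta_1+\eta_2^2)(1+\eta_2-\eta_1^2)$ with $\eta_1=\eta\sigma_{\min}$, $\eta_2=\eta\sigma_{\max}$ follows by taking the worst coordinate, i.e.\ the largest spectral radius among the $r$ companion matrices $M(\sigma_i\eta)$ appearing in Proposition~1's proof.
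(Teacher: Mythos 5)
Your proposal is correct and follows essentially the same route as the paper: express the first-order iterates as affine combinations of past gradients, left-multiply by the constant orthogonal factors $\mathbf V^\top$ and $\mathbf U^\top$ from the SVD of $\mathbf A$ (using that $\mathbf B=(\mathbf A\mathbf A^\top)^{1/2}$ shares the left singular vectors $\mathbf U$, and that $\bm\phi^*=-\bm\theta^*$), and observe that the diagonal $\mathbf D$ decouples the recursion into $r$ independent copies of the scalar problem with step size $\sigma_i\eta$. The paper handles your concern about the alternate update ordering implicitly by allowing arbitrary span coefficients $\rho_{st},\delta_{st},\mu_{st}$ in the iterate representation, which is exactly the consistency check you describe.
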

\begin{proof}
	The proof is extended from the proof of Lemma 3 in \cite{GAN-stable2}. The general class of first-order optimization methods derive the following updations:
	\[
	\bm\psi_{t+1} = \bm\psi_0 + \sum\limits_{s=0}^{t+1}\rho_{st} (\mathbf A^\top\bm\theta_s-\mathbf c)=\psi_0 + \sum\limits_{s=0}^{t+1}\rho_{st} \mathbf A^\top(\bm\theta_s-\bm\theta^*),
	\]
	\[
	\bm\phi_{t+1} = \bm\phi_0 + \frac{1}{2}\sum\limits_{s=0}^{t+1}\delta_{st} (\mathbf B+\mathbf B^\top)(\bm\theta_s + \bm\phi_s),
	\]
	\[
	\bm\theta_{t+1} = \bm\theta_0 + \sum\limits_{s=0}^{t+1}\mu_{st} [\mathbf A(\bm\psi_s-\bm\psi^*) + \frac{1}{2}(\mathbf B+\mathbf B^\top)(\bm\theta_s + \bm\phi_s)],
	\]
	where $\rho_{st}, \delta_{st}, \mu_{st}\in \mathbb R$ depend on specific optimization method (for example, in SGD, $\rho_{tt}=\delta_{tt}=\mu_{tt}$ remain as a non-zero constant for $\forall t$ and other coefficients are zero). 
	
	Using SVD $\mathbf A=\mathbf U\mathbf D\mathbf V^\top$ and the fact $\theta^*=-\phi^*$, $\mathbf B=(\mathbf U\mathbf D\mathbf D^\top\mathbf U^\top)=\mathbf D$, we have
	\[
	\mathbf V^\top (\bm\psi_{t+1}-\bm\psi^*) = \mathbf V^\top (\bm\psi_{0}-\bm\psi^*) + \sum\limits_{s=0}^{t+1}\rho_{st} \mathbf D^\top \mathbf U^\top(\bm\theta_s-\bm\theta^*)
	\]
	\[
	\mathbf U^\top (\bm\phi_{t+1}-\bm\phi^*) = \mathbf U^\top (\bm\phi_{0}-\bm\phi^*) + \sum\limits_{s=0}^{t+1}\delta_{st} \mathbf U^\top \mathbf D(\bm\theta_s-\bm\theta^*)+\mathbf U^\top\mathbf D(\bm\phi_s-\bm\phi^*),
	\]
	\[
	\mathbf U^\top (\bm\theta_{t+1}-\bm\theta^*) = \mathbf U^\top (\bm\theta_{0}-\bm\theta^*) + \sum\limits_{s=0}^{t+1}\rho_{st} [\mathbf D \mathbf V^\top(\bm\psi_s-\bm\psi^*) + \mathbf U^\top \mathbf D(\bm\theta_s-\bm\theta^*)+\mathbf U^\top\mathbf D(\bm\phi_s-\bm\phi^*)],
	\]
	and equivalently,
	\[
	\bm{\widetilde \psi}_{t+1} = \bm{\widetilde \psi}_{0} + \sum\limits_{s=0}^{t+1}\rho_{st} \mathbf D^\top \bm{\widetilde \theta}_{t}, \quad
	\bm{\widetilde \phi}_{t} = \bm{\widetilde \phi}_{0} + \sum\limits_{s=0}^{t+1}\delta_{st} \mathbf D(\bm{\widetilde \theta}_{t}+\bm{\widetilde \phi}_{t}),
	\]
	\[
	\bm{\widetilde \theta}_{t+1} = \bm{\widetilde \theta}_{0} + \sum\limits_{s=0}^{t+1}\rho_{st} \mathbf D( \bm{\widetilde \psi}_{t} + \bm{\widetilde \theta}_{t}+\bm{\widetilde \phi}_{t}).
	\]
	Note that $\mathbf D$ is a rectangular matrix with non-zero elements on a diagonal block of size $r$. Hence, the above $r$-dimensional problem can be reduced to $r$ unidimensional problems:
	\[
	[\bm{\widetilde \psi}_{t+1}]_i = [\bm{\widetilde \psi}_{0}]_i + \sum\limits_{s=0}^{t+1}\rho_{st} \sigma_i [\bm{\widetilde \theta}_{t}]_i, \quad
	[\bm{\widetilde \phi}_{t}]_i = [\bm{\widetilde \phi}_{0}]_i + \sum\limits_{s=0}^{t+1}\delta_{st} \sigma_i([\bm{\widetilde \theta}_{t}]_i+[\bm{\widetilde \phi}_{t}]_i),
	\]
	\[
	[\bm{\widetilde \theta}_{t+1}]_i = [\bm{\widetilde \theta}_{0}]_i + \sum\limits_{s=0}^{t+1}\rho_{st} \sigma_i([\bm{\widetilde \psi}_{t}]_i + [\bm{\widetilde \theta}_{t}]_i+[\bm{\widetilde \phi}_{t}]_i).
	\]
	The above iterations can be conducted independently in each dimension where the optimization in $i$-th dimension follows the same updating rule with step size $\sigma_i\eta$ as problem in (\ref{eqn-lemma3-1dim}).   
\end{proof}

Furthermore, since problem (\ref{eqn-lemma3-1dim}) can achieve convergence with a linear rate of $(1-\eta+\eta^2)(1+\eta-\eta^2)$ using alternate SGD (the proof is similar to that of (\ref{eqn-eqv})), the multi-dimensional problem in (\ref{eqn-bim-obj}) can achieve convergence by SGD with at least a rate of $(1-\eta_1+\eta_2^2)(1+\eta_2-\eta_1^2)$ where $\eta_1=\eta\sigma_{max}$, $\eta_2=\eta\sigma_{min}$ and $\sigma_{max}$ (resp. $\sigma_{min}$) denotes the maximum (resp. minimum) singular value of matrix $\mathbf A$. We conclude the proof for Theorem 4.

Theorem~\ref{thm-multi-converge} suggests that the added term $H(\bm\phi, \bm\theta)$ with affiliated variables $\phi$ could help the SGD algorithm achieve convergence to the the same optimum points as directly optimizing $F(\bm\psi, \bm\theta)$. Our method is related to consensus optimization algorithm (\cite{GAN-stable1}) which adds a regularization term $\|\nabla_{\bm\theta} F(\bm\psi, \bm\theta)\|$ + $\|\nabla_{\bm\psi} F(\bm\psi, \bm\theta)\|$ to (\ref{eqn-bi-obj}) resulting extra quadratic terms for $\bm\theta$ and $\bm\psi$. The disadvantage of such method is the requirement of Hessian matrix of $F(\bm\psi, \bm\theta)$ which is computational expensive for high-dimensional data. By contrast, our solution only requires the first-order derivatives.

\section{Details for Implementations}

\subsection{Synthetic Datasets}\label{appx-datasets}

We provide the details for two synthetic datasets. The Two-Circle dataset consists of 24 Gaussian mixtures where 8 of them are located in an inner circle with radius $r_1=4$ and 16 of them lie in an outer circle with radius $r_2=8$. For each Gaussian component, the covariance matrix is $\left(\begin{matrix} 0.2 & 0\\ 0&0.2 \end{matrix}\right)=\sigma_1\mathbf I$ and the mean value is $[r_1\cos t, r_1\sin t]$, where $t=\frac{2\pi\cdot k}{8}$, $k=1,\cdots,8$, for the inner circle, and $[r_2\cos t, r_2\sin t]$, where $t=\frac{2\pi\cdot k}{16}$, $k=1,\cdots,16$ for the outer circle. We sample $N_1=2000$ points as true observed samples for model training. 

The Two-Spiral dataset contains 100 Gaussian mixtures whose centers locate on two spiral-shaped curves. For each Gaussian component, the covariance matrix is  $\left(\begin{matrix} 0.5 & 0\\ 0&0.5 \end{matrix}\right)=\sigma_2\mathbf I$ and the mean value is $[-c_1\cos c_1, c_1\sin c_1]$, where $c_1=\frac{2\pi}{3}+linspace(0, 0.5, 50)\cdot 2\pi$, for one spiral, and $[c_2\cos c_2, -c_2\sin c_2]$, where $c_2=\frac{2\pi}{3}+linspace(0, 0.5, 50)\cdot 2\pi$ for another spiral. We sample $N_2=5000$ points as true observed samples.

\subsection{Model Specifications and Training Algorithm}\label{appx-specific}

In different tasks, we consider different model specifications in order to meet the demand of capacify as well as test the effectiveness under various settings. Our proposed framework (\ref{eqn-joint}) adopts Wasserstein distance for the first term and two Stein discrepancies for the second and the third terms. We can write (\ref{eqn-joint}) as a more general form
\begin{equation}\label{eqn-joint-general} 
\min\limits_{\theta, \phi} \mathcal D_1(\Pr, \Pg) + \lambda_1 \mathcal D_2(\Pr, \Pe) + \lambda_2 \mathcal D_3(\Pg, \Pe),
\end{equation}
where $\mathcal D_1$, $\mathcal D_2$, $\mathcal D_3$ denote three general discrepancy measures for distributions. 
As stated in our remark, $\mathcal D_1$ can be specified as arbitrary discrepancy measures for implicit generative models. Here we also use JS divergence, the objective for valina GAN. To well distinguish them, we call the model using Wasserstein distance (resp. JS divergence) as Joint-W (resp. Joint-JS) in our experiments. On the other hand, the two Stein discrepancies in (\ref{eqn-joint}) can be specified by KSD (as defined by $\mathcal S_k$ in (\ref{eqn-ksd})) or general Stein discrepancy with an extra critic (as defined by $\mathcal S$ in (\ref{eqn-steindis})). Hence, the two specifications for $\mathcal D_1$ and the two for $\mathcal D_2$ ($\mathcal D_3$) compose four different combinations in total, and we organize the objectives in each case in Table~\ref{tbl-objective}.

In our experiments, we use KSD with RBF kernels for $\mathcal D_2$ and $\mathcal D_3$ in Joint-W and Joint-JS on two synthetic datasets. For MNIST with conditional training (given the digit class as model input), we also use KSD with RBF kernels. For MNIST and CIFAR with unconditional training (the class is not given as known information), we find that KSD cannot provide desirable results so we adopt general Stein discrepancy for higher model capacity. 

\begin{table}
	\caption{Objectives for different specifications of $\mathcal D_1(\Pr, \Pg)$, $\mathcal D_2(\Pr, \Pe)$ and $\mathcal D_3(\Pg, \Pe)$. We specify $\mathcal D_1$ as Wasserstein distance or JS divergence in our paper and for $\mathcal D_2$ and $\mathcal D_3$ we consider the general Stein discrepancy or kernel Stein discrepancy. Here we use $\mathcal W$, $\mathcal {JS}$ to denote Wasserstein distance and JS divergence respectively, and $\mathcal S$, $\mathcal S_k$ to represent general Stein discrepancy and kernel Stein discrepancy respectively. We omit the gradient penalty term for Wasserstein distance here but use it in experiments.}
	\centering
	\vspace{2pt}
	\begin{tabular}{cccc}
		\toprule[1pt]
		$\mathcal D_1$ & $\mathcal D_2$ & $\mathcal D_3$ & Objective  \\
		\specialrule{0em}{1pt}{1pt}
		\hline
		\specialrule{0em}{1pt}{1pt}
		$\mathcal W$ & $\mathcal S$ & $\mathcal S$ & \tabincell{c}{$\min_\theta \min_\phi \max_\psi  \max_\pi  \mathbb E_{\mathbf x\sim \mathbb P_{data}}[d_\psi(\mathbf x)] - \mathbb E_{\mathbf z\sim p_0}[d_\psi(G_\theta(\mathbf z))]$\\ $+ \lambda_1\mathbb E_{\mathbf x\sim \mathbb P_{data}}[\mathcal A_{p_\phi}[\mathbf f_\pi(\mathbf x)]] + \lambda_2\mathbb E_{\mathbf z\sim \mathbb p_0}[\mathcal A_{p_\phi}[\mathbf f_\pi(G_\theta(\mathbf z))]]$}  \\
		\specialrule{0em}{1pt}{1pt}
		\hline
		\specialrule{0em}{1pt}{1pt}
		$\mathcal W$ & $\mathcal S_k$ & $\mathcal S_k$ & \tabincell{c}{$\min_\theta \min_\phi \max_\psi   \mathbb E_{\mathbf x\sim \mathbb P_{data}}[d_\psi(\mathbf x)] - \mathbb E_{\mathbf z\sim p_0}[d_\psi(G_\theta(\mathbf z))]$\\ $+ \lambda_1\mathbb E_{\mathbf x, \mathbf x'\sim \mathbb P_{data}}[u_{p_\phi}(x, x')] + \lambda_2\mathbb E_{\mathbf z, \mathbf z'\sim p_0}[u_{p_\phi}(G_\theta(\mathbf z), G_\theta(\mathbf z'))]$}\\
		\specialrule{0em}{1pt}{1pt}
		\hline
		\specialrule{0em}{1pt}{1pt}
		$\mathcal {JS}$ & $\mathcal S$ & $\mathcal S$ & \tabincell{c}{$\min_\theta \min_\phi\max_\psi  \max_\pi  \mathbb E_{\mathbf x\sim \mathbb P_r}[\log(d_\psi(\mathbf x))] + \mathbb E_{\mathbf z\sim p_0}[\log(1-d_\psi(G_\theta(\mathbf z)))]$\\ $+ \lambda_1\mathbb E_{\mathbf x\sim \mathbb P_{data}}[\mathcal A_{p_\phi}[\mathbf f_\pi(\mathbf x)]] + \lambda_2\mathbb E_{\mathbf z\sim \mathbb p_0}[\mathcal A_{p_\phi}[\mathbf f_\pi(G_\theta(\mathbf z))]]$} \\
		\specialrule{0em}{1pt}{1pt}
		\hline
		\specialrule{0em}{1pt}{1pt}
		$\mathcal {JS}$ & $\mathcal S_k$ & $\mathcal S_k$ & \tabincell{c}{$\min_\theta \min_\phi \max_\psi \mathbb E_{\mathbf x\sim \mathbb P_r}[\log(d_\psi(\mathbf x))] + \mathbb E_{\mathbf z\sim p_0}[\log(1-d_\psi(G_\theta(\mathbf z)))]$\\ $+ \lambda_1\mathbb E_{\mathbf x, \mathbf x'\sim \mathbb P_{data}}[u_{p_\phi}(x, x')] + \lambda_2\mathbb E_{\mathbf z, \mathbf z'\sim  p_0}[u_{p_\phi}(G_\theta(\mathbf z), G_\theta(\mathbf z'))]$}\\
		\hline
	\end{tabular}
	\label{tbl-objective}
\end{table}

The objectives in Table~\ref{tbl-objective} appear to be comutationally expensive. In the worst case (using general Stein discrepancy), there are two minimax operations where one is from GAN or WGAN and one is from Stein discrepancy estimation. To guarantee training efficiency, we alternatively update the generator, estimator, Wasserstein critic and Stein critic over the parameters $\theta$, $\phi$, $\psi$ and $\pi$ respectively. Specifically, in one iteration, we optimize the generator over $\theta$ and the estimator over $\phi$ with one step respectively, and then optimize the Wasserstein critic over $\psi$ with $n_d$ steps and the Stein critic over $\pi$ with $n_c$ steps. Such training approach guarantees the same time complexity order of proposed method as that of GAN or WGAN, and the training time for our model can be bounded within constant times the time for training GAN model. In our experiment, we set $n_d=n_c=5$ and empirically find that our model Stein Bridging would be two times slower than WGAN on average. We present the training algorithm for Stein Bridging in Algorithm 1.

\begin{algorithm}[h]
	\label{alg}
	\caption{Training Algorithm for Stein Bridging}
	\textbf{REQUIRE:} observed training samples $\{\mathbf x\}\sim \mathbb P_{real}$.\\
	\textbf{REQUIRE:} $\theta_0$, $\phi_0$, $\psi_0$, $\pi_0$, initial parameters for generator, estimator, Wasserstein critic and Stein critic models respectively. $\alpha^E=0.0002$, $\beta_1^E=0.9$, $\beta_2^E=0.999$, Adam hyper-parameters for explicit models. $\alpha^I=0.0002$, $\beta_1^I=0.5$, $\beta_2^I=0.999$, Adam hyper-parameters for implicit models. $\lambda_1=1, \lambda_2$, weights for $\mathcal D_2$ and $\mathcal D_3$ (we suggest increasing $\lambda_2$ from 0 to 1 through training). $n_d=5$, $n_c=5$ number of iterations for Wasserstein critic and Stein critic, respectively, before one iteration for generator and estimator. $B=100$, batch size.\\
	\While{not converged}{
		{\For{$n=1,\cdots, n_d$}{
				Sample $B$ true samples $\{\mathbf x_i\}_{i=1}^B$ from $\{\mathbf x\}$\;
				Sample $B$ random noise $\{\mathbf z_i\}_{i=1}^B\sim P_0$ and obtain generated samples $\mathbf{\widetilde x_i}=G_\theta(\mathbf z_i)$ \;
				$\mathcal L_{dis}=\frac{1}{B}\sum_{i=1}^B d_\psi(\mathbf x_i) - d_\psi(\mathbf{\widetilde  x_i}) - \lambda(\|\nabla_{\mathbf{\hat x_i}}d_\psi(\mathbf{\hat x_i})\|-1)^2 $ // the last term is for gradient penalty in WGAN-GP where $\mathbf{\hat x_i}=\epsilon_i\mathbf x_i+(1-\epsilon_i)\mathbf{\widetilde  x_i}, \epsilon_i\sim U(0,1)$\;
				$\psi_{k+1}\leftarrow Adam(-\mathcal L_{dis}, \psi_{k}, \alpha^I, \beta_1^I, \beta_2^I)$// update the Wasserstein critic\;
		}}
		{\For{$n=1,\cdots, n_c$}{
				Sample $B$ true samples $\{\mathbf x_i\}_{i=1}^B$ from $\{\mathbf x\}$\;
				Sample $B$ random noise $\{\mathbf z_i\}_{i=1}^B\sim P_0$ and obtain generated samples $\mathbf {\widetilde x_i}=G_\theta(\mathbf z_i)$ \;
				$\mathcal L_{critic}=\frac{1}{B}\sum_{i=1}^B \lambda_1\mathcal A_{p_\phi}[\mathbf f_\pi(\mathbf x)] + \lambda_2\mathcal A_{p_\phi}[\mathbf f_\pi(\mathbf {\widetilde x_i})]$\;
				$\pi_{k+1}\leftarrow Adam(-\mathcal L_{critic}, \pi_{k}, \alpha^E, \beta_1^E, \beta_2^E)$// update the Stein critic\;
		}}
		Sample $B$ random noise $\{\mathbf z_i\}_{i=1}^B\sim P_0$ and obtain generated samples $\mathbf {\widetilde x_i}=G_\theta(\mathbf z_i)$ \;
		$\mathcal L_{est}= \frac{1}{B}\sum_{i=1}^B \lambda_1\mathcal A_{p_\phi}[\mathbf f_\pi(\mathbf x)] + \lambda_2\mathcal A_{p_\phi}[\mathbf f_\pi(\mathbf {\widetilde x_i})]$\;
		$\phi_{k+1}\leftarrow Adam(\mathcal L_{est}, \phi_{k}, \alpha^E, \beta_1^E, \beta_2^E)$// update the density estimator\;
		$\mathcal L_{gen}= \frac{1}{B}\sum_{i=1}^B - d_\psi(\mathbf{\widetilde  x_i}) + \lambda_2\mathcal A_{p_\phi}[\mathbf f_\pi(\mathbf {\widetilde x_i})]$\;
		$\theta_{k+1}\leftarrow Adam(\mathcal L_{gen}, \theta_{k}, \alpha^I, \beta_1^I, \beta_2^I)$// update the sample generator\;
	}
	\textbf{OUTPUT:} trained sample generator $G_\theta(\mathbf z)$ and density estimator $p_\phi(\mathbf x)$.
\end{algorithm}

\subsection{Implementation Details}\label{appx-implementation}

We give the information of network architectures and hyper-parameter settings for our model as well as each competitor in our experiments. The hyper-parameters are searched with grid search.

The energy function is often parametrized as a sum of multiple experts (\cite{PoE}) and each expert can have various function forms depending on the distributions. 
If using sigmoid distribution, the energy function becomes (see section 2.1 in \cite{DGM} for details)
\begin{equation}\label{eqn-energy}
E_{\phi}(\mathbf x) = \sum_i \log (1+e^{-(\mathbf W_i n(\mathbf x)+b_i)}),
\end{equation}
where $n(\mathbf x)$ maps input $\mathbf x$ to a feature vector and could be specified as a deep neural network, which corresponds to deep energy model (\cite{DEM1})

When not using KSD, the implementation for Stein critic $\mathbf f$ and operation function $\phi$ in (\ref{eqn-steindis}) has still remained an open problem. Some existing studies like \cite{sns} set $d'=1$ in which situation $\mathbf f$ reduces to a scalar-function from $d$-dimension input to one-dimension scalar value. Such setting can reduce computational cost since large $d'$ could lead to heavy computation for training. Empirically, in our experiments on image dataset, we find that setting $d'=1$ can provide similar performance to $d'=10$ or $d'=100$. Hence, we set $d'=1$ in our experiment in order for efficiency. Besides, to further reduce computational cost, we let the two Stein critics share the parameters, which empirically provide better performance than two different Stein critics. 

Another tricky point is how to design a proper $\Gamma$ given $d'\neq d$ where the trace operation is not applicable. One simple way is to set $\Gamma$ as some matrix norms. However, the issue is that using matrix norm would make it hard for SGD learning. The reason is that the $\Gamma$ and the expectation in (\ref{eqn-steindis}) cannot exchange the order, in which case there is no unbiased estimation by mini-batch samples for the gradient. Here, we specify $\Gamma$ as max-pooling over different dimensions of $\mathcal A_{p_\phi}[\mathbf f_\pi(\mathbf x)]$, i.e. the gradient would back-propagate through the dimension with largest absolute value at one time. Theoretically, such setting can guarantee the value in each dimension reduces to zero through training and we find it works well in practice. 

For synthetic datasets, we set the noise dimension as 4. All the generators are specified as a three-layer fully-connected (FC) neural network with neuron size $4-128-128-2$, and all the Wasserstein critics (or the discriminators in JS-divergence-based GAN) are also a three-layer FC network with neuron size $2-128-128-1$. For the estimators, we set the expert number as 4 and the feature function $n(\mathbf x)$ is a FC network with neuron size $2-128-128-4$. Then in the last layer we sum the outputs from each expert as the energy value $E(\mathbf x)$. The activation units are searched within $[LeakyReLU, tanh, sigmoid, softplus]$. The learning rate $[1e-6, 1e-5, 1e-4, 1e-3, 1e-2]$ and the batch size $[50, 100, 150, 200]$. The gradient penalty weight for WGAN is searched in $[0, 0.1, 1, 10, 100]$.

For MNIST dataset, we set the noise dimension as 100. All the critics/discriminators are implemented as a four-layer network where the first two layers adopt convolution operations with filter size $5\time 5$ and stride $[2, 2]$ and the last two layers are FC layers. The size for each layer is $1-64-128-256-1$. All the generators are implemented as a four-layer networks where the first two layers are FC and the last two adopt deconvolution operations with filter size $5\time 5$ and stride $[2, 2]$. The size for each layer is $100-256-128-64-1$. For the estimators, we consider the expert number as 128 and the feature function is the same as the Wasserstein critic except that the size of last layer is $128$. Then we sum the outputs from each expert as the energy value. The activation units are searched within $[ReLU, LeakyReLU, tanh]$. The learning rate $[2e-5, 2e-4, 2e-3, 2e-2]$ and the batch size $[32, 64, 100, 128]$. The gradient penalty weight for WGAN is searched in $[1, 10, 100, 1000]$.

For CIFAR dataset, we adopt the same architecture as DCGAN for critics and generators. As for the estimator, the architecture of feature function is the same as the critics except the last year where we set the expert number as 128 and sum each output as the output energy value. The architectures for Stein critic are the same as Wasserstein critic for both MNIST and CIFAR datasets. In other words, we consider $d'=1$ in (\ref{eqn-steindis}) and further simply $\phi$ as an average of each dimension of $\E_{\bx\sim\P}[\cA_{\Q}\bbf(\bx)]$. Empirically we found this setting can provide efficient computation and decent performance.

\subsection{Evaluation Metrics}\label{appx-metrics}

We adopt some quantitative metrics to evaluate the performance of each method on different tasks. In section 4.1, we use two metrics to test the sample quality: Maximum Mean Discrepancy (MMD) and High-quality Sample Rate (HSR). MMD measures the discrepancy between two distributions $X$ and $Y$, $MMD(X,Y) = \|\frac{1}{n}\sum_{i=1}^n\Phi(x_i) - \frac{1}{m}\sum_{j=1}^m\Phi(y_i)\|$ where $x_i$ and $y_j$ denote samples from $X$ and $Y$ respectively and $\Phi$ maps each sample to a RKHS. Here we use RBF kernel and calculate MMD between generated samples and true samples. HSR statistics the rate of high-quality samples over all generated samples. For Two-Cirlce dataset, we define the generated points whose distance from the nearest Gaussian component is less than $\sigma_1$ as high-quality samples. We generate 2000 points in total and statistic HSR. For Two-Spiral dataset, we set the distance threshold as $5\sigma_2$ and generate 5000 points to calculate HSR. For CIFAR, we use the Inception V3 Network in Tensorflow as pre-trained classifier to calculate inception score.

In section 4.2, we use three metrics to characterize the performance for density estimation: KL divergence, JS divergence and AUC. We divide the map into a $300\time 300$ meshgrid, calculate the unnormalized density values of each point given by the estimators and compute the KL and JS divergences between estimated density and ground-truth density. Besides, we select the centers of each Gaussian components as positive examples (expected to have high densities) and randomly sample 10 points within a circle around each center as negative examples (expected to have relatively low densities) and rank them according to the densities given by the model. Then we obtain the area under the curve (AUC) for false-positive rate v.s. true-positive rate.

\clearpage
\begin{table}
	\captionof{table}{Distances between means of generated digits (resp. images) and true digits (resp. images) on MNIST (resp. CIFAR-10).}
	\centering
	\small
	\vspace{2pt}
	\begin{tabular}{c|cc||cc}
		\toprule[1pt]
		\specialrule{0em}{1pt}{1pt}
		& \multicolumn{2}{c||}{MNIST}  &  \multicolumn{2}{c}{CIFAR}  \\
		\specialrule{0em}{1pt}{1pt} \cline{2-5} \specialrule{0em}{1pt}{1pt}
		Method &  $l_1$ Dis & $l_2$ Dis &  $l_1$ Dis & $l_2$ Dis  \\
		\specialrule{0em}{1pt}{1pt}
		\hline
		\specialrule{0em}{1pt}{1pt}
		WGAN-GP & 13.80 & 0.93 & 80.98 & 1.72 \\
		WGAN+LR & 12.91 & 0.86  & 82.96 & 1.81 \\
		WGAN+ER & 12.26 & 0.77  & 72.28 & 1.59 \\
		WGAN+VA & 12.38 & 0.78  & 69.01 & 1.53 \\
		DGM & 12.12 & 0.79  & 179.30 & 3.95 \\
		Joint-W & \textbf{11.82} & \textbf{0.73} & \textbf{64.23} & \textbf{1.41} \\
		\hline
	\end{tabular}
	\label{tbl-bias}
\end{table}

\begin{table}[t]
	\caption{Quantitative results including MMD (lower is better), HSR (higher is better) as the metrics for quality of generated samples and KLD (lower is better), JSD (lower is better), AUC (higher is better) as the metrics for accuracy of estimated densities on Two-Circle and Two-Spiral datasets.}
	\small
	\centering
	\begin{tabular}{c|ccccc||ccccc}
		\toprule[1pt]
		\specialrule{0em}{1pt}{1pt}
		& \multicolumn{5}{c||}{Two-Cirlce}  &  \multicolumn{5}{c}{Two-Spiral}  \\
		\specialrule{0em}{1pt}{1pt} \cline{2-11} \specialrule{0em}{1pt}{1pt}
		Method & MMD & HSR & KLD & JSD & AUC & MMD & HSR & KLD & JSD & AUC \\
		\specialrule{0em}{1pt}{1pt} \hline \specialrule{0em}{1pt}{1pt}
		GAN & 0.0033   & 0.772 &  -  & - & -& 0.0082  & 0.583 & - & -&-\\
		GAN+VA & 0.0118   & 0.295 &  -  & - & -&  0.0085 & 0.761 & - & -&-\\
		WGAN-GP & 0.0010   & 0.841 &  -  & - & -& 0.0090  & 0.697 & - &- &-\\
		WGAN+VA & 0.0016   & 0.835 &  -  & - & -&  0.0159 & 0.618 &-  &- &-\\
		DEM & -   & - &  2.036  & 0.431 & 0.683& -  & - & 1.206 &0.315 &0.640\\
		EGAN & -   & - &  3.350  & 0.474 & 0.616 &  - & - & 1.916 &0.445& 0.499\\
		DGM & 0.0040   & 0.774 &  2.272  & 0.445 & 0.600&  0.0019 & 0.833 & 1.725 &0.414 &0.589\\
		Joint-JS & 0.0037   & \textbf{0.883} &  1.104  & 0.297 & \textbf{0.962}& 0.0031 & 0.717 & 0.655 & 0.193 & 0.808\\
		Joint-W & \textbf{0.0007}   & 0.844 &  \textbf{1.030}  & \textbf{0.281} & 0.961&  \textbf{0.0003}  & \textbf{ 0.909} & \textbf{0.364}  &\textbf{0.110} & \textbf{0.810}\\
		\bottomrule[1pt]
	\end{tabular}
	\label{tbl:tcts-res}
\end{table}

\clearpage

\begin{figure*}[t]
\centering
		\includegraphics[width=0.6\textwidth]{figure/TS-MMD-Curve.pdf}
		\caption{Learning curves in Two-Spiral dataset.}
\end{figure*}

\begin{figure*}[t]
\centering
		\includegraphics[width=0.6\textwidth]{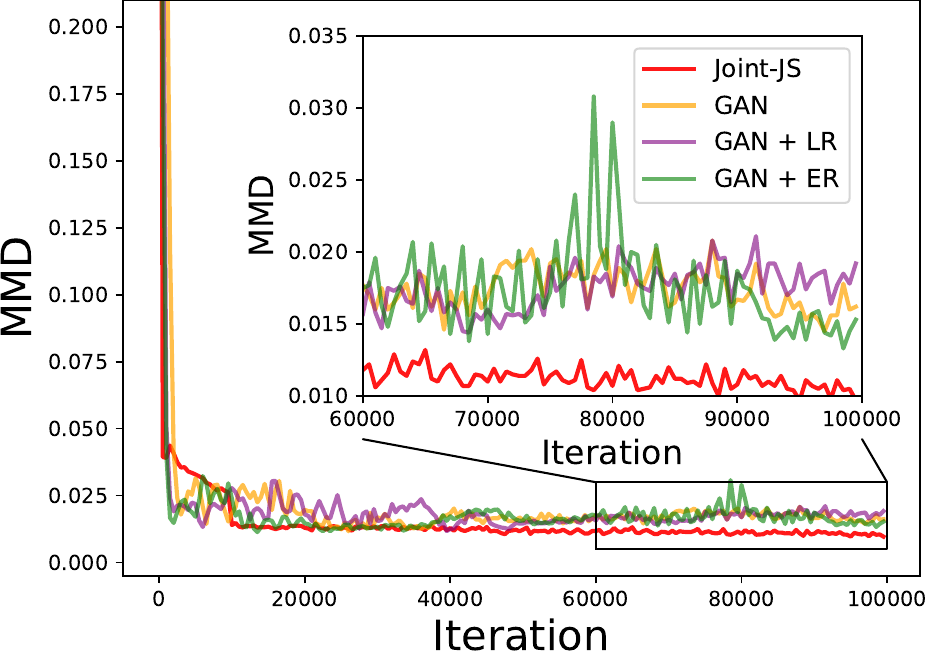}
		\caption{Learning curves in Two-Circle dataset.}
\end{figure*}

\clearpage
\begin{figure*}[t]
	\centering
	\subfigure[Randomly sampled over all digits]{
		\centering
		\includegraphics[width=0.5\textwidth]{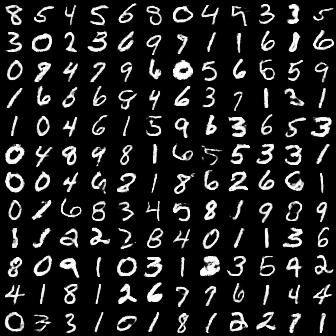}
	}%
	\subfigure[Randomly sampled over digits with top 50\% densities]{
		\centering
		\includegraphics[width=0.5\textwidth]{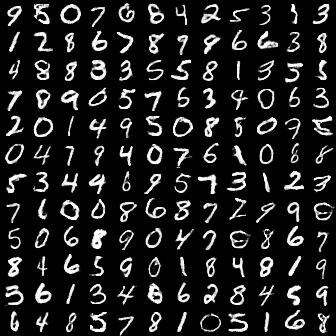}
	}%
	\caption{Generated digits given by Joint-W on MNIST.}
	\label{fig:mnist-sample}
\end{figure*}

\clearpage
\begin{figure*}[t]
	\centering
	\subfigure[Randomly sampled over all images]{
		\centering
		\includegraphics[width=0.5\textwidth]{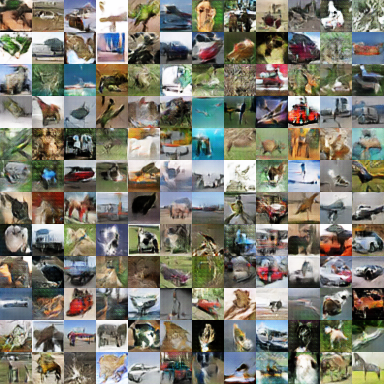}
	}%
	\subfigure[Randomly sampled over images with top 50\% densities]{
		\centering
		\includegraphics[width=0.5\textwidth]{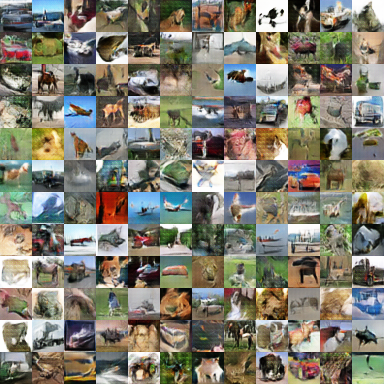}
	}%
	\caption{Generated images given by Joint-W on CIFAR.}
	\label{fig:cifar-sample}
\end{figure*}

\clearpage
\begin{figure*}[t]
	\centering
	\subfigure[Generated digits with highest densities]{
		\centering
		\includegraphics[width=0.5\textwidth]{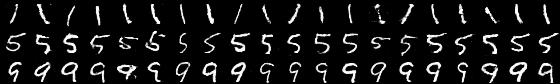}
	}%
	\subfigure[Generated digits with lowest densities]{
		\centering
		\includegraphics[width=0.5\textwidth]{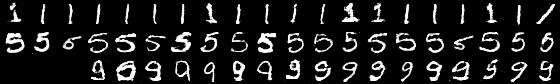}
	}%
	
	\subfigure[Real digits with highest densities]{
		\centering
		\includegraphics[width=0.5\textwidth]{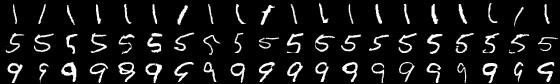}
	}%
	\subfigure[Real digits with lowest densities]{
		\centering
		\includegraphics[width=0.5\textwidth]{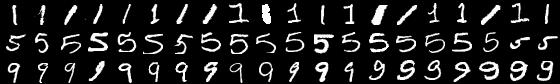}
	}%
	\caption{The generated digits (and real digits) with the highest densities and the lowest densities given by Joint-W.}
	\label{fig:mnist-joint-rank}
\end{figure*}

\begin{figure*}[t]
	\centering
	\subfigure[Generated digits with highest densities]{
		\centering
		\includegraphics[width=0.5\textwidth]{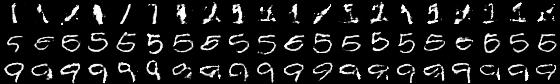}
	}%
	\subfigure[Generated digits with lowest densities]{
		\centering
		\includegraphics[width=0.5\textwidth]{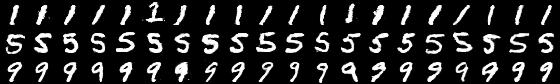}
	}%
	
	\subfigure[Real digits with highest densities]{
		\centering
		\includegraphics[width=0.5\textwidth]{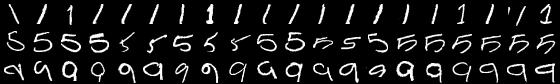}
	}%
	\subfigure[Real digits with lowest densities]{
		\centering
		\includegraphics[width=0.5\textwidth]{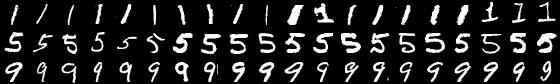}
	}%
	\caption{The generated digits (and real digits) with the highest densities and the lowest densities given by DGM.}
	\label{fig:mnist-dgm-rank}
\end{figure*}

\begin{figure*}[t]
	\centering
	\subfigure[Generated digits with highest densities]{
		\centering
		\includegraphics[width=0.5\textwidth]{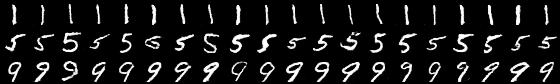}
	}%
	\subfigure[Generated digits with lowest densities]{
		\centering
		\includegraphics[width=0.5\textwidth]{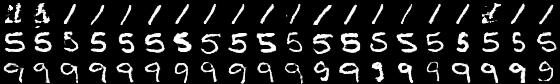}
	}%
	
	\subfigure[Real digits with highest densities]{
		\centering
		\includegraphics[width=0.5\textwidth]{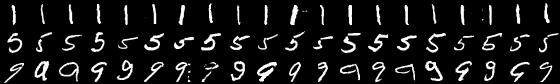}
	}%
	\subfigure[Real digits with lowest densities]{
		\centering
		\includegraphics[width=0.5\textwidth]{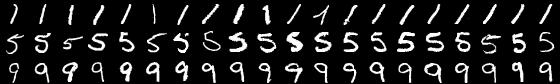}
	}%
	\caption{The generated digits (and real digits) with the highest densities and the lowest densities given by EGAN.}
	\label{fig:mnist-egan-rank}
\end{figure*}

\end{document}